\documentclass[11pt]{article}
\usepackage{lipsum}
\usepackage{mathrsfs}
\usepackage{graphicx}
\usepackage{booktabs}
\usepackage{multirow}
\usepackage{bm}
\usepackage{hyperref}
\usepackage{float}
\usepackage{placeins}
\usepackage{subcaption}
\usepackage{enumitem}
\usepackage{mathtools}
\usepackage{wrapfig}
\usepackage{xurl}
\usepackage{adjustbox}
\usepackage{algorithm}
\usepackage{algpseudocode}
\usepackage{etoolbox}
\usepackage{amssymb}
\usepackage{amsthm}
\newtheorem{theorem}{Theorem}
\newtheorem{lemma}{Lemma}
\newtheorem{proposition}{Proposition}
\usepackage[normalem]{ulem}

\begin{document}

\author{Min Young Baeg, Yoon-Yeong Kim}

\title{PDE-regularized Dynamics-informed Diffusion with Uncertainty-aware Filtering for Long-Horizon Dynamics}
\date{}
\maketitle

\begin{abstract}
Long-horizon spatiotemporal prediction remains a challenging problem due to cumulative errors, noise amplification, and the lack of physical consistency in existing models. While diffusion models provide a probabilistic framework for modeling uncertainty, conventional approaches often rely on mean squared error objectives and fail to capture the underlying dynamics governed by physical laws. In this work, we propose PDYffusion, a dynamics-informed diffusion framework that integrates PDE-based regularization and uncertainty-aware forecasting for stable long-term prediction. The proposed method consists of two key components: a PDE-regularized interpolator and a UKF-based forecaster. The interpolator incorporates a differential operator to enforce physically consistent intermediate states, while the forecaster leverages the Unscented Kalman Filter to explicitly model uncertainty and mitigate error accumulation during iterative prediction. We provide theoretical analyses showing that the proposed interpolator satisfies PDE-constrained smoothness properties, and that the forecaster converges under the proposed loss formulation. Extensive experiments on multiple dynamical datasets demonstrate that PDYffusion achieves superior performance in terms of CRPS and MSE, while maintaining stable uncertainty behavior measured by SSR. We further analyze the inherent trade-off between prediction accuracy and uncertainty, showing that our method provides a balanced and robust solution for long-horizon forecasting.
\end{abstract}

\section{Introduction}\label{sec:intro}

In many applied scientific domains, accurately tracking and forecasting dynamical systems over long time horizons remains a fundamental and challenging problem. Traditional numerical approaches grounded in explicit mathematical modeling often incur substantial computational cost and become impractical for large-scale or long-term simulations. As a result, data-driven approaches based on deep neural networks have been widely explored as alternatives for long-horizon dynamical prediction, including Recurrent Neural Networks (RNNs)~\cite{elman1990finding} and Long Short-Term Memory (LSTM) networks~\cite{hochreiter1997long}.
However, their effectiveness in long-horizon forecasting remains limited due to vanishing gradients and cumulative prediction errors, due to their autoregressive structures that sequentially generate future states by conditioning on previously predicted outputs.

To alleviate these limitations, subsequent research has explored probabilistic and generative modeling approaches that aim to capture the full predictive distribution rather than producing a single deterministic trajectory. Among these, Generative Adversarial Networks (GANs)~\cite{goodfellow2014generative} were introduced to enhance sample diversity and model uncertainty in dynamical prediction. Despite their ability to generate diverse samples, however, GAN-based approaches suffer from well-known challenges, including training instability, mode collapse, and limited reproducibility, which hinder their reliability in long-term dynamical forecasting.

Diffusion models~\cite{ho2020denoising,song2020denoising} provide a principled alternative to autoregressive and adversarial approaches by learning the full conditional distribution through iterative noise injection and denoising. Their probabilistic nature enables uncertainty estimation, while their connections to ordinary differential equations (ODEs), partial differential equations (PDEs), and stochastic differential equations (SDEs) make them well-suited for physics-informed modeling. Owing to these advantages, diffusion models have become a dominant paradigm in high-dimensional generative modeling, particularly in static image synthesis. Until recently, however, their application to time-dependent and continuously evolving dynamical systems remained limited.

Recent studies have begun to extend diffusion models to long-horizon spatiotemporal forecasting, with TimeDiff~\cite{shen2023non}, MCVD~\cite{voleti2022mcvd}, and DYffusion~\cite{ruhling2023DYffusion} serving as representative examples. These works demonstrate that diffusion-based frameworks can successfully move beyond static generation and be adapted to dynamic prediction tasks involving long time scales.
Despite this progress, several fundamental challenges remain unresolved when applying diffusion models to continuous-time dynamical systems. Existing approaches typically rely on Gaussian diffusion processes and optimize training objectives based primarily on mean squared error, which inadequately capture the intrinsic physical and dynamical structure governed by underlying PDEs. As a consequence, these models may exhibit degraded distributional fidelity, limited point-wise tracking accuracy, and increased sensitivity to noise in long-horizon predictions. Furthermore, Gaussian-based diffusion formulations are not immune to cumulative error effects, particularly when deployed in iterative forecasting settings.

Motivated by these limitations, we propose a novel framework named \textbf{PDYffusion}, a PDE-based dynamics-informed diffusion model for long-horizon spatiotemporal forecasting. The proposed framework decomposes the prediction process into two complementary stages: interpolation and forecasting.
Specifically, we introduce a \textbf{PDE-regularized interpolator} that constructs physically consistent intermediate states by incorporating differential operators into the interpolation process. On top of this, we design an \textbf{uncertainty-aware forecaster} based on the Unscented Kalman Filter (UKF), which propagates predictions while explicitly modeling uncertainty and mitigating cumulative error.
By coupling physics-aware interpolation with uncertainty-aware forecasting, the proposed framework enables stable and coherent long-horizon prediction.

In a nutshell, the main contributions of this work are summarized as follows.
\begin{itemize}
    \item We introduce a PDE-regularized interpolation mechanism that enforces physically consistent intermediate trajectories, improving distributional fidelity and temporal coherence beyond standard MSE-based objectives.
    \item We develop an uncertainty-aware forecasting scheme based on the Unscented Kalman Filter, which effectively reduces cumulative error and stabilizes long-term predictions.
    \item We provide theoretical analysis for both components, establishing their convergence properties and supporting the proposed design.
\end{itemize}


The remainder of this paper is organized as follows. In Section~\ref{sec:Related}, we explain the prerequisite background for our research. 
In Section~\ref{sec:sec3}, we develop the proposed PDYffusion model using the PDE-based sampling and UKF-based filtering method, along with its mathematical analysis provided in Section~\ref{sec:sec4}. 
In Section~\ref{sec:sec5}, we present quantitative and qualitative experimental results on long-horizon dynamical datasets, demonstrating the effectiveness of our model. At last, Section~\ref{sec:sec6} concludes the paper.

\section{Related Work}
\label{sec:Related}

\subsection{Time-series Prediction using Deep Generative Model}\label{sec:2.1}
Deep generative models, including Variational Autoencoders (VAEs)~\cite{Kingma2013AutoEncodingVB}, Generative Adversarial Networks (GANs)~\cite{goodfellow2014generative}, Normalizing Flows~\cite{rezende2015variational}, and diffusion-based models~\cite{ho2020denoising,song2020denoising}, aim to learn the underlying probability distribution of data in order to generate new samples or perform related tasks such as interpolation, restoration, and conditional generation.
Among these families, diffusion models have recently emerged as a dominant paradigm due to their training stability, strong mode coverage, and scalability. By learning a gradual noise injection process and its corresponding denoising dynamics, diffusion models avoid adversarial optimization and exhibit favorable quality–diversity trade-offs across a wide range of domains, including images, audio, video, and scientific data. In most existing formulations, diffusion models are constructed using Gaussian forward processes and are trained by minimizing mean-squared error (MSE) objectives derived from denoising score matching.

Several recent works have extended Gaussian-based diffusion models with the backward reconstruction principle of Denoising Diffusion Probabilistic Models (DDPM)~\cite{ho2020denoising} for spatiotemporal and dynamical settings. TimeDiff~\cite{shen2023non} and MCVD~\cite{voleti2022mcvd} adapt the standard diffusion framework to long-horizon prediction by modeling temporal evolution through conditional denoising and masked reconstruction strategies. These approaches demonstrate that diffusion models can be applied beyond static data generation and can capture complex temporal dependencies over extended time scales. However, despite their empirical successes, these methods largely inherit the core assumptions of Gaussian diffusion models, including MSE-based training objectives and Markovian dynamics.

As a consequence, existing diffusion-based dynamical models often struggle to capture the underlying physical structure of continuous-time systems governed by partial differential equations. In particular, Gaussian noise assumptions and purely data-driven objectives limit their ability to enforce physical consistency, which leads to degraded long-horizon stability, distributional drift, and sensitivity to noise accumulation. These limitations motivate the incorporation of physics-informed structure and uncertainty-aware correction mechanisms into diffusion-based frameworks for dynamical prediction.

\subsection{DYffusion}\label{sec:2.2}

DYffusion~\cite{ruhling2023DYffusion} is a diffusion model specifically designed for predicting long-term dynamics. Unlike standard DDPM, DYffusion reformulates diffusion as an alternating sequence of interpolation and prediction steps. In this framework, the forward process is replaced by an interpolator, $I_\phi$, that reconstructs intermediate states between two observed time points by training with the following mean squared error (MSE) objective,
\begin{align}
\min_\phi ||I_\phi (x_t,x_{t+h},i)-x_{t+i}||_2^2 ,   
\label{eq:2.1}
\end{align}
where $i$ denotes the interpolation index between two observed data points, $x_t$ and $x_{t+h}$.
Then, the reverse process is implemented by a predictor, $F_\theta$, which is trained to forecast future states conditioned on the interpolated trajectory with the following objective,
\begin{align}
    \min_\theta ||F_\theta(I_\phi,i)-x_{t+h}||_2^2,
\label{eq:2.2}    
\end{align}
while the interpolator remains frozen.

During inference, DYffusion adopts a cold sampling~\cite{bansal2023cold} strategy which alternates between forecasting and interpolation without injecting stochastic noise. In this framework, the forecaster first predicts the terminal state as $\hat{x}_{t+h}=F_{\theta}(I_\phi,i_n)=F_{\theta}(\hat{x}_{t+i_n},i_n)$\footnote{Here, the index $i_n$ denotes that the intermediate time step is independently sampled at each iteration, reflecting the stochastic selection of interpolation points during forecaster learning.}. Then, the interpolator refines intermediate states based on both the initial observation and the predicted future. Specifically, the intermediate state is updated as
\begin{align}
    \hat{x}_{t+i_{n+1}} = \hat{x}_{t+i_n}+I_\phi(x_t,\hat{x}_{t+h},i_{n+1})-I_\phi(x_t,\hat{x}_{t+h},i_n),
\end{align}
which can be interpreted as a consistency-based correction that progressively aligns the trajectory with the interpolated path.
Unlike conventional autoregressive diffusion models that rely on noisy transitions, cold sampling operates entirely in the data space, enabling deterministic refinement and improved interpretability.

Despite these advantages, DYffusion inherits several structural limitations that restrict its applicability.
Since both interpolation and prediction stages are trained using MSE-based objectives, which prioritize point-wise reconstruction accuracy, they fail to encode the underlying dynamical constraints typically described by partial differential equations.
Furthermore, although DYffusion mitigates some aspects of autoregressive error accumulation, its predictor still operates on intermediate states that are estimates produced by the model. Error introduced during interpolation can therefore propagate through successive prediction-interpolation cycles, leading to increased sensitivity to noise naturally lying in long-time dynamics.

\subsection{PDE-based Sampling}\label{sec:2.3}

PDE-based sampling~\cite{khristenko2019analysis} aims to efficiently generate Gaussian random fields with prescribed covariance structures, particularly in settings where direct covariance factorization~\cite{davis1987production} or Karhunen–Loève expansion~\cite{schwab2006karhunen} becomes computationally prohibitive due to non-local dependencies.
A commonly considered example is the zero-mean Gaussian random field $u$ with Mat\'ern covariance. 
Let $D \subset \mathbb{R}^d$ be an open, simply connected, bounded domain; and consider a probability space $(\Omega,\mathcal{A},\mathbb{P})$, with a set of events $\Omega$, a $\sigma$-algebra $\mathcal{A}$, and a probability measure $\mathbb{P}$.
The Mat\'ern covariance function is defined as:
\begin{align}
C(x,y) &= \sigma_c^{2}\, M_{\nu}\!\bigl(\kappa \,\|x-y\|_{2}\bigr)\ , 
\forall x,y \in D ,
\label{eq:2.3}
\end{align}
where $\sigma_c^2$ denotes the marginal variance, $\kappa=\tfrac{\sqrt{2\nu}}{\rho}$ is related to the correlation length $\rho$, and $M_{\nu}(\cdot)$ is the unit Mat\'ern function given by: 
\begin{align}
M_{\nu}(x) &= \frac{x^{\nu} K_{\nu}(x)}{2^{\,\nu-1}\,\Gamma(\nu)},
\label{eq:2.4}
\end{align}
with $K_{\nu}$ denoting the modified Bessel function of the second kind.


A key insight from Whittle~\cite{de43fa33-b182-36b5-bc8c-fe95fc0a9e2f,whittle1963stochastic} is that a Gaussian random field with Mat\'ern covariance can be equivalently characterized as the solution to a stochastic partial differential equation (SPDE). Specifically, the field $u$ satisfies 
\begin{align}
\bigl(E-\kappa^{-2}\Delta\bigr)^{-\alpha/2} u &= \eta\, W,
\label{eq:2.5}
\end{align}
where $\alpha=\nu+\tfrac{d}{2}$, $ \eta^{2} = \sigma_c^{2} (4\pi)^{d/2}\,
\frac{\Gamma\!\bigl(\nu + \tfrac{d}{2}\bigr)}{\kappa^{d}\,\Gamma(\nu)} $ is a normalization constant, and $W$ denotes Gaussian white noise.
This formulation admits an explicit solution representation as:
\begin{align}
u = \eta \bigl(E-\kappa^{-2}\Delta\bigr)^{\alpha/2} W.
\label{eq:2.5}
\end{align}
Thus, the essence of PDE-based sampling is to generate a structural random field from white noise using a PDE structure.

\subsection{Unscented Kalman Filter}\label{sec:2.4}
The Kalman Filter~\cite{kalman1960new} is a standard framework for sequential state estimation under Gaussian uncertainty. Since the standard formulation assumes linear dynamics, several nonlinear extensions have been proposed~\cite{jazwinski1966filtering, evensen1994sequential}. Among them, the Unscented Kalman Filter (UKF)~\cite{julier2004unscented} provides an effective compromise between accuracy and computational efficiency by propagating sigma points through nonlinear dynamics.

For the $n$-dimensional state $x_k\sim\mathcal{N}(x_k, P_k)$ and the $m$-dimensional observation $z_k$ at the $k^{th}$ time index, the nonlinear state-measurement model is given by the following. 
\begin{align}
x_k &= f(x_{k-1}) + w_k, \quad\text{where} \quad
w_k \sim\mathcal \mathcal{N}(0,Q), \label{eq:2.7}\\
z_k &= h(x_k) + v_k, \quad \text{where}\quad v_k \sim \mathcal \mathcal{N}(0,R).
\label{eq:2.8}
\end{align}
Here, $f$ denotes the dynamics function, $h$ denotes the measurement function, $Q$ denotes the system error covariance, and $R$ denotes the measurement error covariance.

Then, the set of sigma points, denoted as $\{X_k^{(i)}\}$, are sampled as:
\begin{align}
\mathcal X_{k}^{(0)} &= \hat x_{k}, \label{eq:2.10}\\
\mathcal X_{k}^{(i)} &= \hat x_{k} + \bigl[\sqrt{(n+\lambda)P_{k}}\bigr]_i, \quad i=1,\dots,n, \label{eq:2.11}\\
\mathcal X_{k}^{(i)} &= \hat x_{k} - \bigl[\sqrt{(n+\lambda)P_{k}}\bigr]_{i-n}, \quad i=n+1,\dots,2n, \label{eq:2.12}
\end{align}
where $\lambda= \alpha^2 (n+\kappa) - n$ is defined with predefined hyperparameters $\alpha$ and $\kappa$. Each sigma point is then assigned the corresponding weight as:
\begin{align}
W_m^{(0)} &= \frac{\lambda}{n+\lambda}, &
W_c^{(0)} &= \frac{\lambda}{n+\lambda} + (1-\alpha^2+\beta), \label{eq:2.13}\\
W_m^{(i)} &= \frac{1}{2(n+\lambda)}, &
W_c^{(i)} &= \frac{1}{2(n+\lambda)}, \quad i=1,\dots,2n \label{eq:2.14}.
\end{align}
where $\beta$ is also a predefined hyperparameter.

In the prediction step of UKF, each sigma point is propagated by the dynamics and aggregated using the weights as:
\begin{align}
\mathcal X_{k+1\mid k}^{(i)} &= f\!\bigl(\mathcal X_{k}^{(i)}\bigr), \quad i=0,\dots,2n, \label{eq:2.15}\\
\hat x_{k+1\mid k} &= \sum_{i=0}^{2n} W_m^{(i)}\, \mathcal X_{k+1\mid k}^{(i)}, \label{eq:2.16}\\
P_{k+1\mid k} &= \sum_{i=0}^{2n} W_c^{(i)}
\bigl(\mathcal X_{k+1\mid k}^{(i)}-\hat x_{k+1\mid k}\bigr)
\bigl(\mathcal X_{k+1\mid k}^{(i)}-\hat x_{k+1\mid k}\bigr)^\top + Q.\label{eq:2.17}
\end{align}
The sigma points are also propagated by the measurement function and aggregated as:
\begin{align}
\mathcal Z_{k+1}^{(i)} &= h\!\bigl(\mathcal X_{k+1\mid k}^{(i)}\bigr), \quad i=0,\dots,2n, \label{eq:2.18}\\
\hat z_{k+1} &= \sum_{i=0}^{2n} W_m^{(i)}\, \mathcal Z_{k+1}^{(i)}, \label{eq:2.19}\\
S_{k+1} &= \sum_{i=0}^{2n} W_c^{(i)}
\bigl(\mathcal Z_{k+1}^{(i)}-\hat z_{k+1}\bigr)\bigl(\mathcal Z_{k+1}^{(i)}-\hat z_{k+1}\bigr)^\top + R, \label{eq:2.20}\\
P_{xz} &= \sum_{i=0}^{2n} W_c^{(i)}
\bigl(\mathcal X_{k+1\mid k}^{(i)}-\hat x_{k+1\mid k}\bigr)
\bigl(\mathcal Z_{k+1}^{(i)}-\hat z_{k+1}\bigr)^\top . \label{eq:2.21}
\end{align}

Finally, in the update step of UKF, the predicted states are redefined as follows:
\begin{align}
G &= P_{xz}\, S_{k+1}^{-1}, \label{eq:2.22}\\
\hat x_{k+1} &= \hat x_{k+1\mid k} + G (z_{k+1} - \hat z_{k+1}), \label{eq:2.23}\\
P_{k+1} &= P_{k+1\mid k} - G S_{k+1} G^\top \label{eq:2.24}.
\end{align}
where $G$ denotes the Kalman Gain.

\section{Methodology}
\label{sec:sec3}
In this section, we present PDYffusion for long-horizon dynamics prediction. The method consists of a PDE-regularized interpolator and an uncertainty-aware forecaster based on UKF, as depicted in Figure~\ref{fig:fig1} below. For convenience, the notation used in this section is summarized in Table~\ref{tab:tab1} of Appendix~\ref{app:notation}.
\begin{figure}[h]
  \centering
  \includegraphics[width=1.0\textwidth]{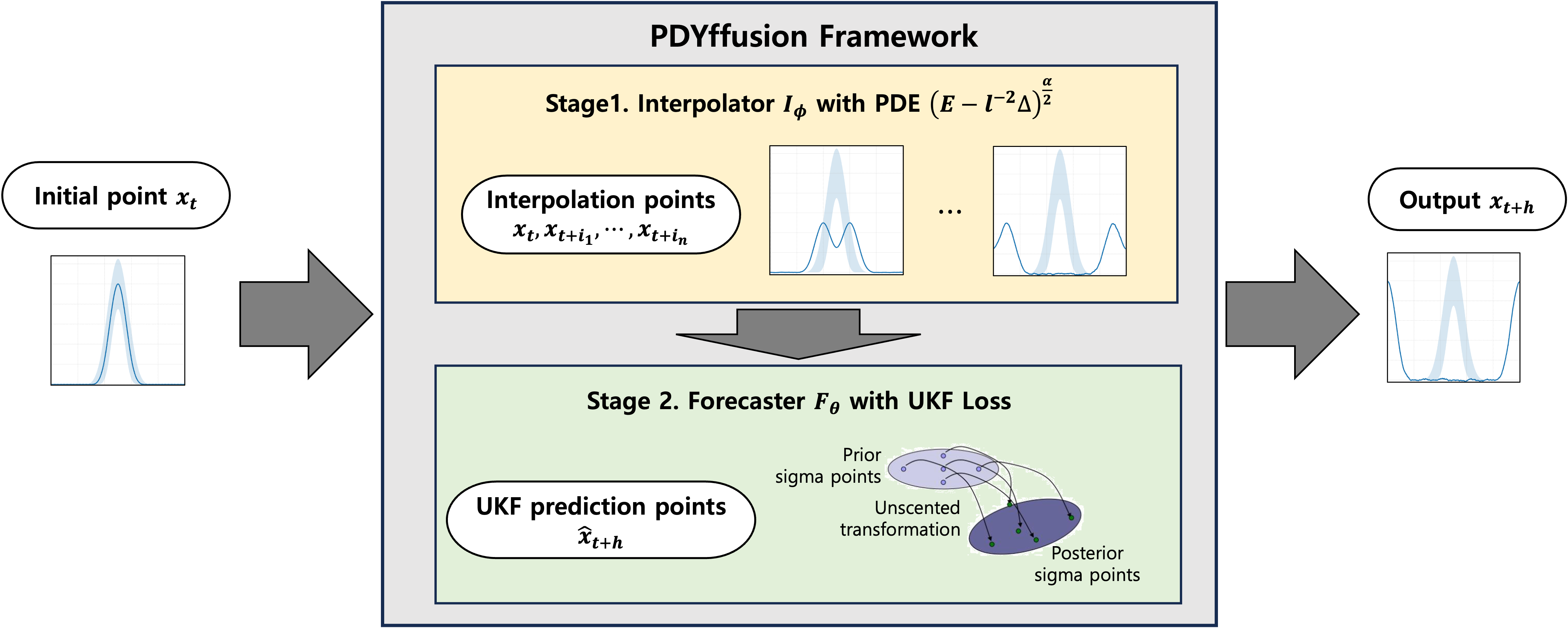}
  \caption{PDYffusion Framework. Given an intial state $x_t$, the interpolator generates intermediate states by enforcing physical structure through the operator $\bigl(E-l^{-2}\Delta\bigr)^{\alpha/2}$. These intermediate states are then used by the forecaster to predict the future state $x_{t+h}$, while a UKF-based prediction process models and corrects uncertainty via sigma points.}
  \label{fig:fig1}
\end{figure}

\subsection{PDE-regularized Interpolator}\label{sec:3.2}

Interpolation plays a central role in diffusion-based dynamical forecasting by refining intermediate states using information from future observations. 
In the prior diffusion-based framework for long-horizon prediction, or DYffusion~\cite{ruhling2023DYffusion}, this interpolation step serves as an analogous function to the forward process of the standard diffusion models.
Unlike conventional diffusion models that inject stochastic noise to drive data toward a standard Gaussian distribution, the difference in the interpolation of DYffusion is that it deterministically shifts the state $x_t$ toward the intermediate state $x_{t+i}$ via cold sampling. That is, the iterative inference scheme and its intermediate-state update is interpolator-driven as stated in Section~\ref{sec:2.2}.
Consequently, it progressively loses the information of $x_t$, performing a role similar to the forward process in traditional diffusion models.
However, when the interpolation model is trained solely by minimizing a mean squared error (MSE) objective as in Eq.~\eqref{eq:2.1}, it fails to capture the intrinsic structure of the underlying dynamics. Since the data considered in this work exhibit physically governed motion, purely point-wise error minimization is insufficient. This motivates the development of an interpolation mechanism that explicitly accounts for physical consistency. Consequently, we adopt the formulation in Eq.~\eqref{eq:2.5}. However, it should be noted that while Eq.~\eqref{eq:2.5} describes a probabilistic mechanism that generates samples directly from white noise, our approach does not rely on stochastic sampling. Instead, the intermediate state in our approach is constructed from observed data through a learned interpolation mapping, rather than being generated by white noise.

To this end, we formulate the interpolation problem as a PDE-constrained variational problem. Specifically, we seek a function $u(x,t)$ that interpolates between boundary states $x_t$ and $x_{t+h}$ while minimizing the functional:
\begin{align}
&J(u)=\frac{1}{2}\int_{t}^{t+h}\int_{\Omega}
\left|
\left(E-l^{-2}\Delta\right)^{\alpha/4}u(x,t)
\right|^{2}\,dx\,dt,\label{eq:3.1}
\\
&\text{where}\quad u(x,t) = x_t,
u(x,t+h) = x_{t+h}.
\end{align}
Here, $\Delta$ denotes the Laplacian operator on the spatial domain $\Omega$, while $l$ and $\alpha$ control the correlation length and smoothness of the interpolated trajectory, respectively. 

To derive the optimality condition, we consider a perturbation $u_{\varepsilon}(x,t) = u(x,t) + \varepsilon v(x,t)$ where the variation $v(x,t)$ satisfies homogeneous boundary conditions (i.e., $v(x,t)= 0$ and $v(x,t+h)=0$). By defining $M=\left(E-l^{2}\Delta\right)^{\alpha/4}$ for simplicity, we rewrite Eq.~\eqref{eq:3.1} as:
\begin{equation}
J(u+\varepsilon v)
=
\frac{1}{2}\int_{t}^{t+h}\int_{\Omega}
\left\|M\big(u(x,t)+\varepsilon v(x,t)\big)\right\|^{2}\,dx\,dt.\label{eq:3.2}
\end{equation}
Taking the first derivative of Eq.~\eqref{eq:3.2} by $\epsilon$ yields
\begin{align}
\frac{d}{d\varepsilon}J(u+\varepsilon v)
&=
\frac{1}{2}\int_{t}^{t+h}
\frac{d}{d\varepsilon}
\left[
\int_{\Omega}
\left\|M\big(u(x,t)+\varepsilon v(x,t)\big)\right\|^{2}\,dx
\right]dt,
\end{align}
and evaluating at $\epsilon=0$ gives
\begin{align}
\left.
\frac{d}{d\varepsilon}
\int_{\Omega}
\left\|M\big(u(x,t)+\varepsilon v(x,t)\big)\right\|^{2}\,dx
\right|_{\varepsilon=0}=
2\int_{\Omega}\left\langle Mu(x,t),\,Mv(x,t) \right \rangle\,dx.
\end{align}
Consequently, the first variation of $J$ can be written as
\begin{align}
\delta J(u,v)
&=
\int_{t}^{t+h}\int_{\Omega}
\left\langle Mu(x,t),\,Mv(x,t)\right\rangle\,dx\,dt
\\
&=
\int_{t}^{t+h}
\left\langle Mu(x,t),\,Mv(x,t)\right\rangle_{L^{2}(\Omega)}\,dt.
\end{align}

Under standard boundary conditions of PDE-based sampling such as Dirichlet or Neumann, the Laplace operator $\Delta$ is self-adjoint~\cite{arendt2007spectral}, which implies
\begin{align}
\langle Lu,Lv\rangle
=
\langle u,L^{2}v\rangle
=
\langle L^{2}u,v\rangle.
\end{align}
Using this property, the stationary condition $\delta J(u,v)=0$ for all $v$ yields
\begin{align}
\int_{t}^{t+h}
\left\langle M^{2}u(x,t),\,v(x,t)\right\rangle_{L^{2}(\Omega)}\,dt
&= 0,
\end{align}
which leads to the Euler-Lagrange equation as
\begin{equation}
M^{2}u(x,t)=
\left(E-l^{-2}\Delta\right)^{\alpha/2}u(x,t)=0.
\end{equation}
Since $\Delta$ is generally an elliptic operator, elliptic regularity ensures the existence and smoothness of the solution inside the domain. 

Assuming that the interpolator network $I_{\phi}$ approximates the solution $u(x,t)$, we impose the corresponding PDE constraint as
\begin{align}
\left(E-l^{-2}\Delta\right)^{\alpha/2}I_{\phi} &= 0.
\end{align}
Accordingly, the loss function of the newly proposed interpolator is defined as
\begin{align}
\min_{\phi}\;
\mathbb{E}_{\,i\sim U[1,h-1],\;x_{(t,t+i,t+h)}\sim X}
\Big[
\left|I_{\phi}(x_t,x_{t+h},i)-x_{t+i}\right|^{2}
+
\lambda
\left|\left(E-l^{-2}\Delta\right)^{\alpha/2}I_{\phi}\right|^{2}
\Big].\label{eq:final_loss}
\end{align}
In Eq.~\eqref{eq:final_loss}, the first term enforces reconstruction consistency by matching the interpolated state $I_{\phi}(x_t,x_{t+h},i)$ to the groundtruth intermediate state $x_{t+i}$, following the original DYffusion.
The novelty of the proposed PDYffusion comes from the second term, which introduces a PDE-based regularization that constrains the iterpolated trajectory to satisfy a physically meaningful differential operator.
Notably, this formulation departs from purely data-driven interpolation by explicitly incorporating a differential operator into the loss function. As a result, the interpolator is guided not only by point-wise supervision but also by the underlying physical structure of the dynamics.

\subsection{UKF-based Forecaster}\label{sec:3.3}

In DYffusion, the forecaster $F_\theta$ predicts the terminal state $x_{t+h}$ from the current state at each iteration, which is an $h$-step-ahead forecast.
Importantly, this current state is typically not the true observation $x_t$, but an intermediate estimate $\hat{x}_{t+i_{n}}$ produced by the iterative sampling loop. 
As a result, the forecaster operates on inputs that already contain approximation error. As this process is repeated over multiple forecasting-interpolation cycles, prediction errors can accumulate and propagate through the closed loop, leading to divergence or physically implausible patterns.
This risk is further amplified when using dropout-based ensembles. That is, while dropout improves sample diversity, individual ensemble members may drift into non-physical regions of the state space, thereby distorting the tails of the predictive distribution.

This observation motivated us to view the diffusion-based forecasting loop not only as a generative sampling process, but also as a sequential state-estimation problem in which prediction and correction should be explicitly disentangled. From this perspective, it becomes natural to consider Bayesian filtering as a corrective mechanism. Rather than relying solely on stochastic sampling or ensemble diversity to implicitly represent uncertainty, we propose to explicitly track and regulate predictive uncertainty through a filtering framework. This insight leads us to integrate a Kalman filtering strategy into the diffusion-based forecasting process.
Kalman Filter provides a principled framework for mitigating such closed-loop error amplification by explicitly separating prediction and update steps while tracking uncertainty via posterior mean and covariance.
However, since our predictor $F_\theta$ is nonlinear, a nonlinear extension must be considered rather than a standard Kalman filter. Among them, the Extended Kalman Filter (EKF)~\cite{jazwinski1966filtering} still relies on linear approximations and is computationally intensive due to second-order derivatives. The Ensemble Kalman Filter (EnKF)~\cite{evensen1994sequential} also avoids explicit linearization by propagating an ensemble of samples, but its computational cost scales with ensemble size.

At this end, to balance accuracy and efficiency, we adopt the Unscented Kalman filter (UKF)~\cite{julier2004unscented}, which approximates uncertainty propagation using a small set of deterministically chosen sigma points.
To start with, we first assume that the forecasted terminal state follows a Gaussian distribution as:
\begin{align}
x_{t+h}\sim
\mathcal{N}\!\big(\hat{x}_{t+h},\,P_{t+h}\big),
\end{align}
where $\hat{x}_{t+h}$ and $P_{t+h}$ denote the UKF-predicted mean and covariance, respectively. Under this assumption, the conditional negative log-likelihood of the true terminal state is given by 
\begin{align}
-\log p\!\left(x_{t+h}\mid \hat{x}_{t+h},\,P_{t+h}\right)
&=
\frac{d}{2}\log(2\pi)
+\frac12\log|P|
+\frac12 (x-\mu)^\top P^{-1}(x-\mu),
\end{align}
where $d$ denotes the dimensionality of the state space.
Defining the prediction error as $e_{t+h}:=x_{t+h}-\hat{x}_{t+h}$, we obtain the UKF-based loss term as:
\begin{align}
L_{\mathrm{UKF}}(\theta,\phi)
:= -\log p\!\left(x_{t+h}\mid \hat{x}_{t+h},\,P_{t+h}\right) 
=
\frac{d}{2}\log(2\pi)
+\frac12\log|P_{t+h}|
+\frac12 e_{t+h}^\top P_{t+h}^{-1} e_{t+h}.\label{eq:ukf_likelihood}
\end{align}
This likelihood-based formulation penalizes not only large prediction errors but also miscalibrated uncertainty estimates, encouraging consistency between the predicted mean trajectory and its associated covariance. In particular, the Mahalanobis term of Eq.~\eqref{eq:ukf_likelihood} adaptively weights errors according to predictive uncertainty, while the log-determinant term regularizes the spread of the covariance.

Finally, we jointly optimize the forecaster and interpolator by combining the standard prediction loss with the UKF-based uncertainty-aware correction:
\begin{align}
\min_{\theta,\phi}\Big(
\big\|F_{\theta}\!\big(I_{\phi},\,i_n\big)-x_{t+h}\big\|_2^{\,2}
+L_{\mathrm{UKF}}(\theta,\phi)\Big)
\label{eq:3.17}
\end{align}

We provide the pseudo code of our algorithm in Algorithm~\ref{alg:alg1}.

\begin{algorithm}[!t]
\caption{Training Procedure of PDYffusion}
\label{alg:PDYffusion}
\begin{algorithmic}[1]
\State \textbf{Input} networks $F_\theta, I_\phi$, norm $\|\cdot\|$, horizon $h$, schedule $\{i_n\}_{n=0}^{N-1}$
\State \textbf{Train interpolator network, $I_\phi$}
\State \quad Sample $i \sim \mathrm{Uniform}(\{1,\ldots,h-1\})$
\State \quad Sample $x_t, x_{t+i}, x_{t+h} \sim X$
\State \quad Optimize $\min_{\phi}\ \Bigl(\bigl\|I_{\phi}(x_t,x_{t+h},i)-x_{t+i}\bigr\|^{2}
+\lambda\,\bigl\|\bigl(E-l^{-2}\Delta\bigr)^{\alpha/2}\,I_{\phi}(x_t,x_{t+h},i)\bigr\|^{2}\Bigr)$
\State \textbf{Train forecaster network, $F_\theta$}
\State \quad Freeze Interpolator network $I_\phi$ 
\State \quad Sample $n \sim \mathrm{Uniform}(\{0,\ldots,N-1\})$ and $x_t, x_{t+h} \sim X$
\State \quad Optimize $\min_{\theta,\phi}\Big(
\big\|F_{\theta}\!\big(I_{\phi},\,i_n\big)-x_{t+h}\big\|_2^{\,2}
+L_{\mathrm{UKF}}(\theta,\phi)\Big)$
\end{algorithmic}
\label{alg:alg1}
\end{algorithm}

\section{Mathematical Analysis}\label{sec:sec4}

To better understand the theoretical properties of the proposed PDYffusion framework, we analyze the behavior of its two core components: the PDE-based interpolator and the filtering-based forecaster. The interpolator is designed to incorporate the underlying physical structure of the dynamics through PDE regularization, while the forecaster stabilizes long-horizon prediction using an uncertainty-aware filtering mechanism.

In this section, we first analyze the interpolator and show that incorporating the PDE term improves the distributional approximation compared to a purely MSE-based interpolator. We then analyze the forecaster equipped with the UKF-based objective and show that the proposed loss formulation theoretically converges to zero under mild assumptions.

\subsection{Interpolator Analysis}\label{sec:3.4.1}

In Section~\ref{sec:3.2}, we introduced a PDE-regularized interpolator designed to capture the underlying dynamics of the data. Unlike conventional interpolation methods that rely solely on reconstruction loss, the proposed approach incorporates a differential operator derived from the governing PDE. This encourages the interpolated trajectory to follow the physical structure of the dynamical system.
In this section, we analyze the theoretical implications of the PDE regularization term. Specifically, we will show that the prediction error of PDYffusion is bounded by the error of DYffusion by using Maximum Mean Discrepancy (MMD)~\cite{gretton2012kernel} as the metric.

Since MMD is calculated in the Reproducible Kernel Hilbert Space (RKHS)~\cite{aronszajn1950theory}, we first show that the Mate\'rn kernel used in PDYffusion is positive semi-definite.
\begin{proposition}[Positive Semi-definite of Mate\'rn Kernel]\label{pro:1}
Mat\'ern Kernel is positive semi-definite.
\end{proposition}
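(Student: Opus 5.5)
The plan is to prove positive semi-definiteness of $C(x,y)=\sigma_c^2 M_\nu(\kappa\|x-y\|_2)$ from Eq.~\eqref{eq:2.3} through Bochner's theorem. Since $C$ is stationary, $C(x,y)=\varphi(x-y)$ with $\varphi(r)=\sigma_c^2 M_\nu(\kappa\|r\|_2)$. It therefore suffices to show that $\varphi$ is the Fourier transform of a finite nonnegative measure on $\mathbb{R}^d$.

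Given such a measure, the conclusion follows directly. For arbitrary points $x_1,\dots,x_N\in D$ and coefficients $c_1,\dots,c_N\in\mathbb{C}$ we write
\begin{align*}
\sum_{j,k} c_j\bar c_k\, C(x_j,x_k)=\int_{\mathbb{R}^d}\Bigl|\sum_j c_j e^{i\omega\cdot x_j}\Bigr|^2 S(\omega)\,d\omega\ \ge 0 .
\end{align*}
This holds provided $S\ge 0$ is integrable and $\varphi(r)=\int S(\omega)e^{i\omega\cdot r}\,d\omega$.

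The key step is to identify $S$ explicitly. The natural candidate is the Matérn spectral density
\[
S(\omega)=\frac{\eta^2}{(2\pi)^d}\bigl(1+\kappa^{-2}\|\omega\|^2\bigr)^{-\alpha},\qquad \alpha=\nu+\tfrac d2 ,
\]
which is exactly what the Whittle SPDE characterization \eqref{eq:2.5} predicts. Applying $(E-\kappa^{-2}\Delta)^{-\alpha/2}$ to white noise multiplies its flat spectrum by the squared Fourier symbol $(1+\kappa^{-2}\|\omega\|^2)^{-\alpha}$.

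To verify the formula directly, I would compute the inverse Fourier transform of this radial function. First use the standard formula for Fourier transforms of radial functions in terms of a Hankel transform involving $J_{d/2-1}$. Then apply the classical integral
\[
\int_0^\infty \frac{s^{d/2}J_{d/2-1}(rs)}{(1+s^2)^{\alpha}}\,ds=\frac{r^{\alpha-1}K_{\alpha-d/2}(r)}{2^{\alpha-1}\Gamma(\alpha)},
\]
a Sonine–Gegenbauer type identity. With $\alpha-d/2=\nu$ and rescaling by $\kappa$, this recovers $r^\nu K_\nu(r)$ up to constants, and the constant $\eta^2$ matches the normalization $M_\nu(0)=1$.

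I expect this Bessel-integral identity and the bookkeeping of constants to be the main technical obstacle, so I would cite the identity rather than rederive it. Positivity itself only requires $S\ge 0$ and $S\in L^1(\mathbb{R}^d)$. Nonnegativity is obvious from the formula. Integrability holds because $2\alpha=2\nu+d>d$ for $\nu>0$.

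A fallback that avoids Bessel functions altogether is the Gamma-mixture representation $(1+s)^{-\alpha}=\Gamma(\alpha)^{-1}\int_0^\infty t^{\alpha-1}e^{-t}e^{-ts}\,dt$. Through it, $S$ is a nonnegative mixture of Gaussian spectral densities, so $\varphi$ is a nonnegative mixture of Gaussian kernels. Each Gaussian kernel is PSD, and nonnegative mixtures of PSD kernels are PSD, which gives the claim without needing the exact closed form beyond identifying the mixture with $M_\nu$.

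Finally, restricting from $\mathbb{R}^d$ to the domain $D$ preserves positive semi-definiteness, since the defining inequality only involves finitely many points.
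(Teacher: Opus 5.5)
Your proposal is correct and follows the same route as the paper: identify the Mat\'ern spectral density $S(\omega)\propto(1+\kappa^{-2}\|\omega\|^2)^{-\alpha}$ and conclude positive semi-definiteness from $S\ge 0$ via Bochner's theorem. Your version is more complete than the paper's, which only asserts the density and its nonnegativity: you prove the needed easy direction of Bochner through the quadratic-form identity, check $S\in L^1$ from $2\alpha>d$, and indicate how to verify the Fourier pair (the Bessel identity you cite is correct, and your Gamma-mixture fallback, which writes $M_\nu$ as a nonnegative mixture of Gaussian kernels, would also work).
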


\begin{proof}
Recall the Mat\'ern kernel as Eq.~\eqref{eq:2.3}, whose power spectrum density becomes 
\begin{align}
S(k) = \sigma_c^2\frac{(2\pi)^\frac{d}{2}\Gamma(\alpha)l^(2v)}{\Gamma(v)(l^2+||k||^2)^\alpha},
\end{align}
where $d$ is dimension, $k$ is the frequency vector, and $\alpha=v+\frac{d}{2}$~\cite{ruhling2023DYffusion,de43fa33-b182-36b5-bc8c-fe95fc0a9e2f,whittle1963stochastic} . 
According to Bochner's Theorem, if a function transformed into the complex domain is positive semi-definite, then the original function is also positive semi-definite.
Having said that, since $S(k)\geq0$ holds, we can conclude that Mat\'ern kernel is also positive semi-definite.
\end{proof}

Proposition~\ref{pro:1} shows that the Mat\'ern kernel, which we used as a similarity metric in the interpolator of the PDYffusion framework, is positive semi-definite. Having said that, Mat\'ern kernel belongs to RKHS, which allows us to calculate MMD as a distance metric.

\begin{lemma}[Error Bound of Interpolator in Hilbert Space]\label{lem:lem4.2}

Let $\mathcal{H}$ be Hilbert space, and let $A=\left(E-l^{-2}\Delta\right)^{\alpha/2}:\mathcal{H}\xrightarrow{}\mathcal{H}$ be a self adjoint and positive semi-definite operator.
Let the outputs of the baseline interpolator and the PDE-regularized interpolator be expressed as
\begin{align}
    I_{\mathrm{base}}(x_t,x_{t+1},i) = u(x_t,x_{t+1},i)+\epsilon_{base},\quad I_{\phi}(x_t,x_{t+1},i) =u(x_t,x_{t+1},i)+\epsilon_{\phi},\label{eq:I_definition}
\end{align}
where $u$ is the groundtruth solution and $\epsilon_{base}, \epsilon_{\phi}$ are zero-mean random errors with covariance matrices $\Sigma_{base}$ and $\sigma_{\phi}$, respectively.
Assume that the PDE regularization enforces the constraint $Au=0$. Then the error covariance satisfies
\begin{align}
    \Sigma_{\phi}\le\Sigma_{base}.
\end{align}   
\end{lemma}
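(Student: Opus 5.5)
The plan is to read the PDE-regularized interpolator as a regularized least-squares estimator, and then compare error covariances in the Loewner order. Write $I_\phi = u + \epsilon_\phi$ and $I_{\mathrm{base}} = u + \epsilon_{base}$. The baseline interpolator minimizes only the reconstruction term of Eq.~\eqref{eq:2.1}. The proposed one minimizes Eq.~\eqref{eq:final_loss}, namely $\|I - x_{t+i}\|^2 + \lambda\|AI\|^2$.

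\textbf{Step 1: closed form.} Treat the output as a free element of $\mathcal{H}$ at the population level, and assume the baseline output is essentially the noisy target. The first-order condition, obtained with the same variational computation used to derive the Euler--Lagrange equation in Section~\ref{sec:3.2}, gives
\begin{align}
I_\phi = (E+\lambda A^{*}A)^{-1} I_{\mathrm{base}} = (E+\lambda A^{2})^{-1} I_{\mathrm{base}},
\end{align}
where self-adjointness of $A$ gives $A^*A = A^2$. Denote this smoothing operator by $T$.

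\textbf{Step 2: error expression.} Apply $T$ to $u+\epsilon_{base}$ and use the hypothesis $Au=0$. This gives $(E+\lambda A^2)u = u$, hence $Tu = u$. Therefore
\begin{align}
\epsilon_\phi = T\epsilon_{base}.
\end{align}
This shows $\epsilon_\phi$ is zero-mean whenever $\epsilon_{base}$ is, and that
\begin{align}
\Sigma_\phi = T\Sigma_{base}T^{*} = T\Sigma_{base}T.
\end{align}

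\textbf{Step 3: covariance bound.} Since $A$ is self-adjoint and positive semi-definite, the spectral theorem (or functional calculus) shows $T$ is self-adjoint with spectrum contained in $(0,1]$. Hence $0 \le T \le E$ and $\|T\| \le 1$. For any $v \in \mathcal{H}$,
\begin{align}
\langle \Sigma_\phi v, v\rangle = \langle \Sigma_{base} Tv, Tv\rangle .
\end{align}
We want to bound this by $\langle \Sigma_{base}v, v\rangle$.

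\textbf{Main obstacle.} This last comparison is where the argument can fail. The inequality $T\Sigma T \le \Sigma$ does not hold for arbitrary $\Sigma \ge 0$ and contractive $T$ unless $T$ and $\Sigma$ commute. The first attempt will be to assume that $\Sigma_{base}$ is diagonalized by the eigenbasis of $A$, that is, the baseline error is stationary, which is natural for Matérn or Laplacian structure on a periodic domain. In a common eigenbasis with eigenvalues $a_k \ge 0$ for $A$ and $s_k \ge 0$ for $\Sigma_{base}$, the claim reduces to the scalar inequality
\begin{align}
s_k/(1+\lambda a_k^2)^2 \le s_k .
\end{align}
This is immediate, and equality holds exactly on $\ker A$. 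If commutation cannot be justified, I would fall back to the weaker but assumption-free trace statement
\begin{align}
\operatorname{tr}\Sigma_\phi = \operatorname{tr}(\Sigma_{base}T^2) \le \|T\|^2\operatorname{tr}\Sigma_{base} \le \operatorname{tr}\Sigma_{base},
\end{align}
and state the Loewner inequality under the commutation assumption.
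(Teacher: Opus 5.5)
Your argument is essentially the paper's. Like the paper, you treat the regularized interpolator as a resolvent filter of the baseline output, use $Au=0$ to get $Tu=u$ and hence $\epsilon_\phi=T\epsilon_{base}$, and compare mode by mode in the eigenbasis of $A$; the paper simply posits $T_\lambda=(E+\lambda A)^{-1}$ instead of deriving $(E+\lambda A^{2})^{-1}$ from the loss, which is immaterial since both are contractions fixing $\ker A$. The commutation hypothesis you flag is exactly what the paper assumes silently when it writes $\Sigma_{base}=\sum_j \mathrm{var}(\xi_j)\,e_je_j^{\top}$ (uncorrelated modal coefficients), so your version is the more careful one; keep that assumption rather than the trace fallback, because Theorem~\ref{thm:thm3.1} needs $w^{\top}\Sigma_\phi w\le w^{\top}\Sigma_{base}w$ for every $w$.
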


\begin{theorem}[Reconstruction Error Bound of Interpolator]\label{thm:thm3.1}

Let $P_\phi$, $P_{base}$, and $P_u$ denote the distributions induced by the PDE-regularized interpolator of PDYffusion, the baseline interpolator by DYffusion, and the groundtruth solution, respectively.
Then the following inequality holds:
\begin{align}
\mathbb{E}\!\left[\mathrm{MMD}_k^2\!\left(P_{\phi},P_u\right)\right]
\le
\mathbb{E}\!\left[\mathrm{MMD}_k^2\!\left(P_{\mathrm{base}},P_u\right)\right].
\label{eq:3.18}
\end{align}
This implies that the PDE-regularized interpolator produces a distribution closer to the groundtruth dynamics in the RKHS induced by the kernel $k$.
\end{theorem}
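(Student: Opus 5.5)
The plan is to reduce the distributional statement to the covariance comparison already established in Lemma~\ref{lem:lem4.2}. Throughout I use the error model of Eq.~\eqref{eq:I_definition}: $I_\phi = u+\epsilon_\phi$ and $I_{\mathrm{base}} = u+\epsilon_{\mathrm{base}}$ with zero-mean errors. The steps are as follows.

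\textbf{Step 1 (feature-space form of MMD).} By Proposition~\ref{pro:1} the Mat\'ern kernel $k$ is positive semi-definite, so it has an RKHS $\mathcal{H}_k$ with feature map $\varphi$. This gives
\begin{align*}
\mathrm{MMD}_k^2(P,Q)=\bigl\|\mathbb{E}_{P}[\varphi(X)]-\mathbb{E}_{Q}[\varphi(Y)]\bigr\|_{\mathcal{H}_k}^2 .
\end{align*}
Translation invariance of $k$ and Bochner's representation from the proof of Proposition~\ref{pro:1} then let me write
\begin{align*}
\mathrm{MMD}_k^2(P_\phi,P_u)=\int S(\omega)\,\bigl|\hat P_u(\omega)\bigr|^2\,\bigl|\psi_{\epsilon_\phi}(\omega)-1\bigr|^2\,d\omega ,
\end{align*}
and similarly for the baseline. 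Here $\psi_\epsilon$ is the characteristic function of the error, and I use that the interpolated law is the convolution $P_u * P_\epsilon$, treating the error as independent of $u$.

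\textbf{Step 2 (second-order expansion).} Expand $1-\psi_\epsilon(\omega)$ to second order, which is legitimate for zero-mean errors:
\begin{align*}
\bigl|\psi_\epsilon(\omega)-1\bigr| \approx \tfrac12\,\omega^\top\Sigma\,\omega .
\end{align*}
Alternatively, I can Taylor expand $\mathbb{E}[k(u+\epsilon,u'+\epsilon')]$ around $\epsilon=0$. Either way, $\mathbb{E}[\mathrm{MMD}_k^2]$ becomes, to leading order, a nondecreasing functional $\Phi(\Sigma)$ of the error covariance. In the kernel-expansion version it is a trace term of the form $\mathrm{tr}(H_k\Sigma)$, where $H_k=-\nabla^2 k(0)$ is positive semi-definite because of the spectral density $S\ge 0$.

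\textbf{Step 3 (monotonicity).} Lemma~\ref{lem:lem4.2} gives $\Sigma_\phi\le\Sigma_{\mathrm{base}}$ in the Loewner order. For $H\succeq 0$, $\mathrm{tr}(H\Sigma)$ is monotone in this order, since $\mathrm{tr}\bigl(H(\Sigma_{\mathrm{base}}-\Sigma_\phi)\bigr)\ge 0$. This yields Eq.~\eqref{eq:3.18}.

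\textbf{Main obstacle.} The monotonicity has to hold exactly, not just to second order. A cleaner route is available if both errors are Gaussian. Then $\epsilon_{\mathrm{base}}\overset{d}{=}\epsilon_\phi+\eta$ with $\eta\sim\mathcal{N}(0,\Sigma_{\mathrm{base}}-\Sigma_\phi)$ independent. Since
\begin{align*}
|\psi_{\epsilon_{\mathrm{base}}}-1|^2=|\psi_{\epsilon_\phi}e^{-\frac12\omega^\top(\Sigma_{\mathrm{base}}-\Sigma_\phi)\omega}-1|^2 ,
\end{align*}
I would compare the two sides pointwise in $\omega$ using the real, positive Gaussian characteristic functions. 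Concretely, $|e^{-a}-1|\le|e^{-a-b}-1|$ for $a,b\ge0$. Combined with $S\ge0$, this gives the inequality integrand by integrand. So I will state the Gaussian-error assumption explicitly, consistent with the covariance-only description in Lemma~\ref{lem:lem4.2}, and use this exact spectral argument rather than the Taylor approximation.
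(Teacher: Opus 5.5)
Your final argument is essentially the paper's own proof: Gaussian errors independent of $u$, the factorization of the characteristic function as $\varphi_{P_u}(\omega)e^{-\frac12\omega^\top\Sigma\omega}$, a pointwise comparison via monotonicity of $a\mapsto 1-e^{-a/2}$ using $\Sigma_\phi\le\Sigma_{\mathrm{base}}$ from Lemma~\ref{lem:lem4.2}, and integration against the nonnegative spectral density, with the expectation over the data taken at the end. You were right to drop the Taylor route, whose stated leading term is wrong: for a translation-invariant kernel the second-order terms in $\mathrm{MMD}_k^2$ cancel, so the leading integrand is $\tfrac14 S(\omega)\,|\varphi_{P_u}(\omega)|^2(\omega^\top\Sigma\omega)^2$, while $\mathrm{tr}(H_k\Sigma)$ is the leading term of the expected squared RKHS distance between the embeddings of $u+\epsilon$ and $u$, which only upper-bounds $\mathrm{MMD}_k^2$.
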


\begin{proof}
Let $u(x_t,x_{t+1},i)$ be the groundtruth solution of the underlying PDE, and let $I_{base}$ and $I_{\phi}$ denote the baseline interpolator and the proposed PDE-regularized interpolator, respectively. We model both interpolators as perturbations of the groundtruth solution as:
\begin{align}
    I_{base}=u+\epsilon_{base}, \quad I_{\phi}=u+\epsilon_{\phi},
\end{align}
where $\epsilon_{base}\sim \mathcal{N}(0,\Sigma_{base})$ and $\epsilon_{\phi}\sim \mathcal{N}(0,\Sigma_{\phi})$ represent interpolation errors, following Eq.~\eqref{eq:I_definition}.
Also, let $P_{base}$, $P_{\phi}$, and $P_u$ denote the distributions induced by $I_{base}$, $I_{\phi}$, and u, respectively.

We analyze the discrepancy between these distributions using the squared Maximum Mean Discrepancy (MMD), which is defined as:
\begin{align}\label{eq:MMD_original}
\mathrm{MMD}^2_k(P,Q)
&:= \mathbb{E}_{X,X'\sim P}\big[k(X,X')\big]
 + \mathbb{E}_{Y,Y'\sim Q}\big[k(Y,Y')\big]
 - 2\,\mathbb{E}_{X\sim P,\,Y\sim Q}\big[k(X,Y)\big],
\end{align}
where $k(\cdot,\cdot)$ denotes the kernel function, and $P$ and $Q$ denote arbitrary distributions to calculate the discrepancy.
By prior work~\cite{sriperumbudur2010hilbert}, which describes the MMD equation with a spectral density function $S_k(\cdot)$, Eq.~\eqref{eq:MMD_original} is reformulated as
\begin{align}
\qquad 
\mathrm{MMD}^2_k(P,Q)
&= \int_{\mathbb{R}^d} S_k(w)\,\big|\varphi_P(w)-\varphi_Q(w)\big|^2\,dw,
\label{eq:eq3.24}
\end{align}
where $\varphi_P(w):= \mathbb{E}_{X\sim P}\!\left[e^{i w^\top X}\right]$ denotes characteristic function~\cite{levy1925calcul} of the distribution $P$.

For the baseline interpolator, we have
\begin{align}
 \varphi_{P_{base}}(w) = E[e^{iw^{\top}(u+\epsilon_{base})}] = e^{iw^\top u}\cdot E[e^{iw^\top\epsilon_{base}}].   
\end{align}
Since $\epsilon_{base}\sim \mathcal{N}(0,\Sigma_{base})$, we get
\begin{align}
 E[e^{iw^\top\epsilon_{base}}]=e^{-\frac{1}{2}w^\top\Sigma_{base}w}.
\end{align}
Thus, the characteristic function for the base interpolator becomes
\begin{align}
    \varphi_{P_{base}}(w)=\varphi_{P_{u}}(w)e^{-\frac{1}{2}w^{\top}\Sigma_{base}w}.\label{eq:char_base}
\end{align}
Similarly, the characteristic function for the PDE-regularized interpolator becomes
\begin{align}
    \varphi_{P_{\phi}}(w)=\varphi_{P_{u}}(w)e^{-\frac{1}{2}w^{\top}\Sigma_{\phi}w}.\label{eq:char_pde}
\end{align}
Subtracting $\varphi_{P_u}(w)$ from Eq.~\eqref{eq:char_base} and Eq.~\eqref{eq:char_pde} yields the followings, respectively:
\begin{align}
    \varphi_{P_{base}}(w)-\varphi_{P_{u}}(w) &= \varphi_{P_{u}}(w)\cdot(e^{-\frac{1}{2}w^{\top}\Sigma_{base}w}-1),
    \label{eq:4.6}\\
     \varphi_{P_{\phi}}(w)-\varphi_{P_{u}}(w) &= \varphi_{P_{u}}(w)\cdot(e^{-\frac{1}{2}w^{\top}\Sigma_{\phi}w}-1).
\end{align}
Taking the squared magnitudes, we get
\begin{align}
    |\varphi_{P_{base}}(w)-\varphi_{P_{u}}(w)|^{2} &= |\varphi_{P_{u}}(w)|^{2}\cdot|(e^{-\frac{1}{2}w^{\top}\Sigma_{base}w}-1)|^{2},\label{eq:base_diff}\\
    |\varphi_{P_{\phi}}(w)-\varphi_{P_{u}}(w)|^{2} &= |\varphi_{P_{u}}(w)|^{2}\cdot|(e^{-\frac{1}{2}w^{\top}\Sigma_{\phi}w}-1)|^{2}.\label{eq:pde_diff}
\end{align}

Now, define the function, $g(a)=(1-e^{-\frac{a}{2}}), a\geq0$, which is monotonically increasing in $a$. Therefore, if $w^{\top}\Sigma_{\phi}w\le w^{\top}\Sigma_{base}w$, then it holds that
\begin{align}
|(e^{-\frac{1}{2}w^{\top}\Sigma_{\phi}w}-1)|^{2} \le |(e^{-\frac{1}{2}w^{\top}\Sigma_{base}w}-1)|^{2}.\label{eq:monoton}
\end{align}
Using the covariance ordering of $\Sigma_\phi \leq \Sigma_{base}$ from Lemma~\ref{lem:4.2}, Eq.~\eqref{eq:monoton} holds for all $w\in \mathbb{R}^d$. Hence, by recalling Eq.~\eqref{eq:base_diff} and Eq.~\eqref{eq:pde_diff}, we get
\begin{align}
    |\varphi_{P_{\phi}}(w)-\varphi_{P_{u}}(w)|^{2} \le |\varphi_{P_{base}}(w)-\varphi_{P_{u}}(w)|^{2}.
\end{align}
Multiplying both sides by the non-negative spectral density, $S_k(w)$, and integrating over $\mathbb{R}^d$, we obtain
\begin{align}
    \int_{\mathbb{R}^d} S_k(w)\big|\varphi_{P_{\phi}}(w)-\varphi_{P_{u}}(w)\big|^2\,dw \le \int_{\mathbb{R}^d} S_k(w)\big|\varphi_{P_{base}}(w)-\varphi_{P_{u}}(w)\big|^2\,dw,
\end{align}
which implies
\begin{align}
    \mathrm{MMD}^2_k(P_{\phi},P_{u}) \le \mathrm{MMD}^2_k(P_{base},P_{u}).
\end{align}
Taking expectation over the data distribution completes the proof.
\end{proof}

Theorem~\ref{thm:thm3.1} shows that incorporating PDE-based regularization reduces the discrepancy between the predicted distribution and the groundtruth distribution in RKHS, compared to the base interpolator trained with MSE loss only. This provides a theoretical justification for the improved stability and physical consistency observed in PDYffusion.

\subsection{Forecaster Analysis}
\label{sec:3.4.2}
In Section~\ref{sec:3.2}, we introduced a forecaster model equipped with UKF. In this section, we analyze the theoretical properties of the proposed forecaster and show that the training objective leads to a well-defined convergence behavior.
To establish the main result, we first introduce two auxiliary lemmas.

\begin{lemma}[Log-determinant difference formula]\label{lem:lem3.2}
    Let $A$ and $B$ be positive definite matrices. Then, the following identity holds.
    \begin{align}
    \log \det(A)-\log \det(B) = \int_{0}^{1} \operatorname{tr}\!\left[(B+s(A-B))^{-1}(A-B)\right]\, ds\nonumber
    \end{align}    
\end{lemma}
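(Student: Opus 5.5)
We prove the identity by integrating the derivative of $\log\det$ along the straight line between $B$ and $A$. Define $M(s) := B + s(A-B) = (1-s)B + sA$ for $s\in[0,1]$.

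\textbf{Step 1: $M(s)$ stays invertible.} Since the positive definite matrices form a convex cone, for every $x\neq 0$
\begin{align}
x^\top M(s)x=(1-s)\,x^\top Bx+s\,x^\top Ax>0 .
\end{align}
So $M(s)$ is positive definite, hence invertible, and $\det M(s)>0$. Consequently $f(s):=\log\det M(s)$ is well defined on $[0,1]$. Its boundary values are $f(0)=\log\det B$ and $f(1)=\log\det A$.

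\textbf{Step 2: the derivative of $f$.} We use Jacobi's formula, $\frac{d}{ds}\det M(s)=\det M(s)\,\operatorname{tr}\!\left[M(s)^{-1}M'(s)\right]$, together with $M'(s)=A-B$. This gives
\begin{align}
f'(s)=\operatorname{tr}\!\left[M(s)^{-1}(A-B)\right].
\end{align}
To keep the argument self-contained, one can instead argue at first order. For small $\delta$,
\begin{align}
\det\!\left(M+\delta(A-B)\right)=\det M\cdot\det\!\left(E+\delta M^{-1}(A-B)\right)=\det M\left(1+\delta\operatorname{tr}[M^{-1}(A-B)]+O(\delta^2)\right),
\end{align}
where the last equality holds because the determinant of $E+\delta X$ expands as $1+\delta\operatorname{tr}X+O(\delta^2)$.

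\textbf{Step 3: regularity.} The map $s\mapsto M(s)^{-1}$ is continuous on $[0,1]$, since matrix inversion is continuous on invertible matrices. Hence $f'$ is continuous and $f$ is $C^1$ on $[0,1]$.

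\textbf{Step 4: conclusion.} By the fundamental theorem of calculus,
\begin{align}
\log\det A-\log\det B=f(1)-f(0)=\int_0^1\operatorname{tr}\!\left[(B+s(A-B))^{-1}(A-B)\right]ds,
\end{align}
which is the claimed identity.

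The only step needing care is Step 2. It can be justified either by citing Jacobi's formula or by the first-order expansion above.
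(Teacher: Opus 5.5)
Your proof is correct and matches the paper's: parametrize the segment $B+s(A-B)$, differentiate $\log\det$ along it using Jacobi's formula, and apply the fundamental theorem of calculus. Your Step 1, which shows the whole segment stays positive definite by convexity of the positive definite cone, supplies a point the paper's proof skips: it assumes only that $A$ and $B$ are invertible, which does not keep the segment nonsingular or the determinant positive.
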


\begin{lemma}[Uniform spectral bounds] \label{lem:lem3.3}
    Let $P_{k}$ denote the covariance matrix generated by the UKF and let $Q$ denote the reference covariance matrix. Then, the eigenvalues of $P_{k}$ and $Q$ can be bounded $\frac{m}{2}\leq \lambda(P_k),\lambda(Q)\leq 2M$.
\end{lemma}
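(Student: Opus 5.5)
The plan is to make explicit the standing assumptions that the statement leaves implicit, and then bound the two UKF steps separately. I will assume that the noise covariances are uniformly well-conditioned, $mE \preceq Q \preceq ME$ and $r E \preceq R$, where $E$ is the identity as in the paper. I will also assume that the forecaster $F_\theta$ (the dynamics $f$ in Eq.~\eqref{eq:2.7}) and the measurement map $h$ are Lipschitz with constants $L_f, L_h$, and that the UKF weights are chosen so that $W_c^{(i)}\ge 0$. Under these hypotheses the bounds for $Q$ are immediate, since $m \ge m/2$ and $M \le 2M$. The real content is a uniform two-sided bound on $P_k$, and I will prove it by induction on $k$ in the form $\tfrac{m}{2}E \preceq P_k \preceq 2ME$.

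\emph{Prediction step.} By Eq.~\eqref{eq:2.17}, $P_{k+1\mid k}$ is $Q$ plus a nonnegative combination of outer products, so $P_{k+1\mid k}\succeq Q \succeq mE$. For the upper bound, each propagated sigma point satisfies $\|f(\mathcal X_k^{(i)})-f(\hat x_k)\|\le L_f\sqrt{(n+\lambda)}\,\|P_k^{1/2}\|$. The weighted spread therefore satisfies
\begin{align}
\Bigl\|\sum_i W_c^{(i)}\bigl(\mathcal X_{k+1\mid k}^{(i)}-\hat x_{k+1\mid k}\bigr)\bigl(\mathcal X_{k+1\mid k}^{(i)}-\hat x_{k+1\mid k}\bigr)^\top\Bigr\| \le c\,L_f^2\,\|P_k\|,
\end{align}
with $c$ depending only on $(n,\lambda,\alpha,\beta)$. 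Consequently $\lambda_{\max}(P_{k+1\mid k})\le M + cL_f^2\|P_k\|$.

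\emph{Update step.} By Eq.~\eqref{eq:2.24}, $P_{k+1}\preceq P_{k+1\mid k}$ trivially, which gives the upper direction. For the lower bound I would rewrite the update in information form via statistical linearization, using $\tilde H := P_{xz}^\top P_{k+1\mid k}^{-1}$:
\begin{align}
P_{k+1}^{-1} \approx P_{k+1\mid k}^{-1} + \tilde H^\top R^{-1}\tilde H .
\end{align}
Since $\|\tilde H\|\lesssim L_h$ and $R\succeq rE$, this yields $\lambda_{\min}(P_{k+1}) \ge \bigl(m^{-1} + L_h^2 r^{-1}\bigr)^{-1}$ up to the linearization residual. The factors $\tfrac12$ and $2$ in the statement are exactly the slack I intend to use to absorb two sources of error: the discrepancy between the unscented update and its linearized information form, and the contraction needed to close the upper bound. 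Both are controlled through Weyl's inequality, $|\lambda_j(A)-\lambda_j(B)|\le\|A-B\|$, provided the perturbations are at most $m/2$ (respectively $M$) in operator norm.

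\emph{Main obstacle.} I expect the hard step to be closing the upper bound in the induction. The recursion $\lambda_{\max}(P_{k+1})\le M + cL_f^2\lambda_{\max}(P_k)$ only stays below $2M$ if either $cL_f^2\le \tfrac12$ (a contractivity assumption on the learned forecaster) or the measurement update removes enough variance, i.e.\ a uniform observability condition. I would try first to impose $cL_f^2\le\tfrac12$ together with $P_0\preceq 2ME$; then $M + \tfrac12\cdot 2M = 2M$ closes the induction cleanly. If that is too strong, the fallback is the classical uniform observability/controllability argument for Riccati-type recursions, with $L_f$, $L_h$, $m$, $M$, $r$ absorbed into the definition of the constants.
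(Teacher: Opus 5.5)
Your route is genuinely different from the paper's, both in what $Q$ is and in where the bound comes from. In the paper, $Q$ is the \emph{reference} covariance, i.e.\ the limit of $P_k$ in Theorem~\ref{thm:thm3.4}. The letter is reused from Eq.~\eqref{eq:2.7}, which explains your reading. The appendix proof assumes $mE\preceq P_k\preceq ME$ and $P_k\to Q$, and passes these bounds to $Q$ by closedness of the Loewner order. It then applies Weyl's inequality with tolerance $\varepsilon=m/2$ to get $\operatorname{spec}(P_k)\subset[\frac m2,2M]$ for all sufficiently large $k$. The factors $\frac12$ and $2$ are just that convergence slack. The argument is short, but its hypothesis on $P_k$ already implies the claim for $P_k$, it uses nothing about the UKF recursion, and the bound is only eventual. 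Your induction over the predict/update steps gives a bound for every $k$ without assuming convergence, which is a stronger result. It costs hypotheses the paper never makes:
\begin{itemize}
\item $W_c^{(i)}\ge0$, which fails for the common small-$\alpha$ tunings where $W_c^{(0)}<0$;
\item a contraction $cL_f^2\le\frac12$ with $c$ depending on the state dimension;
\item a lower bound $P_0\succeq\frac m2E$ for the base case (you only state $P_0\preceq 2ME$).
\end{itemize}

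Two steps need repair. First, the lower bound in the update step does not close as written. You assert the information form only up to a residual that you never bound. Even its main term $(m^{-1}+L_h^2r^{-1})^{-1}$ is at least $m/2$ only if $r\ge mL_h^2$, which is not among your assumptions. Linearization is unnecessary here. Set $v_i=(\mathcal X^{(i)}_{k+1\mid k}-\hat x_{k+1\mid k},\ \mathcal Z^{(i)}_{k+1}-\hat z_{k+1})$. With $W_c^{(i)}\ge0$, the matrix $\sum_i W_c^{(i)}v_iv_i^\top$ is positive semidefinite, with blocks $P_{k+1\mid k}-Q$, $P_{xz}$, and $S_{k+1}-R$. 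Add $R$ to the lower block and take the Schur complement of $S_{k+1}\succ0$; since $P_{k+1}=P_{k+1\mid k}-P_{xz}S_{k+1}^{-1}P_{xz}^\top$, this gives
\begin{align*}
P_{k+1}-Q=(P_{k+1\mid k}-Q)-P_{xz}S_{k+1}^{-1}P_{xz}^\top\succeq 0 .
\end{align*}
So $P_{k+1}\succeq Q\succeq mE$ holds exactly, with no Lipschitz condition on $h$ and no slack needed. Second, the statement is about the reference covariance, not the process noise. You still need the one substantive step of the paper's argument: if $\frac m2E\preceq P_k\preceq 2ME$ for all $k$ and $P_k\to Q$, then $\frac m2E\preceq Q\preceq 2ME$ by closedness. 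With these two changes, your argument proves a stronger result than the paper's, under your extra hypotheses.
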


Lemma~\ref{lem:lem3.2} relates the difference between two log-determinants to a trace integral and is commonly used to analyze the sensitivity of covariance matrices, while Lemma~\ref{lem:lem3.3} guarantees numerical stability when evaluating matrix inverses and log-determinants. Appendix~\ref{proof:lem3.2} and~\ref{proof:lem3.3} provide the detailed proof for both Lemmas, respectively.

\begin{theorem}[Convergence of the UKF-based forecaster]\label{thm:thm3.4}

Let $F_{\theta}(I_{\phi},i_n)=\hat{x}_{t+h}$ denote the prediction of the proposed forecaster, and let $P_{t+h}$ be the corresponding covariance matrix obtained from the Unscented Kalman Filter (UKF).
Consider the objective function $J(\phi,\theta)$ defined in Eq.~\eqref{eq:3.17}.
Then, $J(\phi,\theta)$ converges to zero if and only if the following conditions hold:
\begin{itemize}[leftmargin=*,nosep]
\item (State convergence) The predicted state $\hat{x}_{t+h}$ converges to the groundtruth state $x_{t+h}$. 
\item (Sigma point convergence) The sigma points generated by the UKF converge to the groundtruth trajectory, i.e., $\hat x_{k\mid k-1} = \sum_{i=0}^{2n} W_m^{(i)}\, \mathcal X_{k\mid k-1}^{(i)}\to x_k$. Also, their covariance collapses, i.e., $\sum_{i=0}^{2n}W_c^{(i)}\bigl(\mathcal X_{k\mid k-1}^{(i)}-\hat x_{k\mid k-1}\bigr)
    \bigl(\mathcal X_{k\mid k-1}^{(i)}-\hat x_{k\mid k-1}\bigr)^\top\to0$
\item (Covariance convergence) The predicted covariance matrix $P_{t+h}$ converges to a reference covariance $Q$, where $Q$ satisfies the determinant condition of $\lvert Q\rvert = (2\pi)^{-d}$.
\end{itemize}

\end{theorem}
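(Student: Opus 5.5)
We decompose the objective of Eq.~\eqref{eq:3.17} into a data-fit part and a covariance part, and prove each direction of the equivalence separately. Write $J(\phi,\theta)=J_1+J_2+J_3$ with
\begin{align}
J_1=\|\hat x_{t+h}-x_{t+h}\|_2^2,\qquad J_2=\tfrac12 e_{t+h}^\top P_{t+h}^{-1}e_{t+h},\qquad J_3=\tfrac{d}{2}\log(2\pi)+\tfrac12\log|P_{t+h}|.\nonumber
\end{align}
Note that $J_3=\tfrac12\bigl(\log|P_{t+h}|-\log|Q|\bigr)$ precisely because $|Q|=(2\pi)^{-d}$. This is why the determinant condition appears in the statement. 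The whole argument then consists of controlling $J_2$ by $J_1$ and $J_3$ by $\|P_{t+h}-Q\|$ using Lemma~\ref{lem:lem3.3} and Lemma~\ref{lem:lem3.2}.

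For the ``if'' direction, we first use state convergence to get $J_1\to0$. By Lemma~\ref{lem:lem3.3}, $\lambda_{\min}(P_{t+h})\ge m/2$, so $J_2\le \tfrac{1}{m}\|e_{t+h}\|^2\to0$. For $J_3$, we apply Lemma~\ref{lem:lem3.2} with $A=P_{t+h}$ and $B=Q$. Every matrix $B+s(A-B)$ on the segment is a convex combination of matrices with spectrum in $[m/2,2M]$, so it has the same spectral bounds. Hence
\begin{align}
|J_3|\le \tfrac12\sup_{s}\|(B+s(A-B))^{-1}\|\,\|A-B\|_{*}\le \tfrac{1}{m}\,\|P_{t+h}-Q\|_{*},\nonumber
\end{align}
which tends to zero by covariance convergence. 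The sigma-point condition is used to justify that the UKF mean fed into $e_{t+h}$ is consistent with $\hat x_{t+h}$ (through Eq.~\eqref{eq:2.16}). It also shows that the spread term in Eq.~\eqref{eq:2.17} vanishes, so that $P_{t+h}$ tends to $Q$ rather than to a perturbation of it.

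For the ``only if'' direction, we first deal with a sign issue: $J_2$ and $J_1$ are nonnegative, but $J_3$ may be negative. To handle it, we argue within the admissible class guaranteed by Lemma~\ref{lem:lem3.3} and interpret convergence of $J$ as convergence of each nonnegative constituent. Concretely, we regard the objective as calibrated relative to the reference $Q$, so that $J_3$ is compared through $|J_3|$. Then $J\to0$ forces $J_1\to0$, which gives state convergence. Since $e_{t+h}\to0$ and the UKF mean is the weighted sigma-point average of Eq.~\eqref{eq:2.16}, this also gives mean convergence of the sigma points.

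Next we need $\log|P_{t+h}|\to\log|Q|$ to imply $P_{t+h}\to Q$. A determinant identity alone does not give this. We therefore also use that $Q$ enters $P_{t+h}$ additively in Eq.~\eqref{eq:2.17}, so $P_{t+h}-Q$ equals the sigma-point spread matrix, which is positive semidefinite. For a PSD perturbation $D$, the inequality $\log|Q+D|-\log|Q|\ge \operatorname{tr}\!\bigl[(Q+D)^{-1}D\bigr]\ge \tfrac{1}{2M}\operatorname{tr}D$ follows from Lemma~\ref{lem:lem3.2} together with the spectral bounds. Hence $J_3\to0$ forces $\operatorname{tr}D\to0$, that is, $D\to0$. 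This yields both the covariance collapse of the sigma points and $P_{t+h}\to Q$.

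This last step is the main obstacle: turning determinant convergence into matrix convergence, and making the sign of $J_3$ rigorous. The additive-$Q$ structure of the UKF prediction covariance is what makes it work, and we would state that reliance explicitly.
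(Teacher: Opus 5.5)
Your sufficiency half is the paper's argument: Lemma~\ref{lem:lem3.2} together with the spectral bounds of Lemma~\ref{lem:lem3.3}. Your nuclear-norm estimate simply refines the paper's $\tfrac{2d}{m}\|P_k-Q\|_2$ bound. In the necessity half, your use of the additive structure of Eq.~\eqref{eq:2.17} is a better route than the paper's. The paper just asserts that $L_{\mathrm{UKF}}$ is nonnegative, and it gets $P_{t+h}\to Q$ by naming the limit of a sequence whose determinant converges.

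However, your treatment of the sign of $J_3$ is a genuine gap. ``Regarding the objective as calibrated so that $J_3$ is compared through $|J_3|$'' replaces the objective of Eq.~\eqref{eq:3.17} with a different one. The bounds $\tfrac m2\le\lambda(P_{t+h})\le 2M$ say nothing about the sign of $\log|P_{t+h}|+d\log(2\pi)$. Moreover, your identity $J_3=\tfrac12(\log|P_{t+h}|-\log|Q|)$ already uses $|Q|=(2\pi)^{-d}$, which in this direction is one of the things to be proved. Without it, necessity is simply false. Take process noise $Q=cE$ with $c<1/(2\pi)$ and zero sigma-point spread, so that $P_{t+h}=cE$. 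Pick $e_{t+h}\neq0$ with $\bigl(1+\tfrac{1}{2c}\bigr)\|e_{t+h}\|_2^2=-\tfrac d2\log(2\pi c)$. Then $J=0$, yet state convergence and the determinant condition both fail.

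The repair uses your own ingredient, but as a hypothesis rather than a reinterpretation:
\begin{itemize}[leftmargin=*,nosep]
\item Take $P_{t+h}$ to be the prediction covariance of Eq.~\eqref{eq:2.17}. For the updated covariance of Eq.~\eqref{eq:2.24}, $P_{t+h}-Q$ need not be positive semidefinite.
\item Identify the reference $Q$ with the process-noise covariance.
\item Impose $|Q|=(2\pi)^{-d}$ a priori.
\end{itemize}
Write $D\succeq0$ for the sigma-point spread and $\mu_i\ge0$ for the eigenvalues of $Q^{-1/2}DQ^{-1/2}$. Then
\[
J_3=\tfrac12\log\det\bigl(E+Q^{-1/2}DQ^{-1/2}\bigr)=\tfrac12\sum_i\log(1+\mu_i)\ge0 .
\]
So all three terms are nonnegative, and $J\to0$ forces each of them to vanish. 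In particular, $J_3\to0$ forces every $\mu_i\to0$, hence $D\to0$ and $P_{t+h}\to Q$. This also removes your appeal to Lemma~\ref{lem:lem3.3} in the necessity direction. That appeal is circular there, since the paper proves the lemma under the assumption $P_k\to Q$.

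With these hypotheses stated explicitly, your argument is complete. Without them, neither your proof nor the paper's establishes the ``only if'' direction, because it does not hold.
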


\begin{proof}
We prove the two directions separately.

\noindent\textbf{Necessity} 
Assume that the forecaster's objective converges to zero, that is, $J(\phi,\theta)\to 0$.
By definition, the objective consists of the prediction error term and the UKF-based likelihood term:
\begin{align}
    J(\phi,\theta)= \big\|F_{\theta}(I_{\phi},\,i_n)-x_{t+h}\big\|_2^{\,2} + L_{\mathrm{UKF}}(\theta,\phi)
\end{align}
Since both terms are nonnegative, each term must converge to zero individually.

First, from $\big\|F_{\theta}(I_{\phi},i_n)-x_{t+h}\big\|_2^{\,2}\to0$, it follows that the predicted state converges to the groundtruth state. Hence, we get the first condition
\begin{align}
    \hat{x}_{t+h}\to x_{t+h}.
\end{align}

Next, consider the UKF loss of
\begin{align}
    L_{\mathrm{UKF}}(\theta,\phi)&=\frac{d}{2}\log(2\pi) +\frac12\log|P_{t+h}| +\frac12 e_{t+h}^\top P_{t+h}^{-1} e_{t+h}\\
    &=\frac{d}{2}\log(2\pi) +\frac12\log|P_{t+h}| +\frac12 (x_{t+h}-\hat{x}_{t+h})^\top P_{t+h}^{-1} (x_{t+h}-\hat{x}_{t+h}),\label{eq:ukf_likeliholod}
\end{align}
where the Mahalanobis term in the right-hand side is always nonnegative. Since $L_{UKF}(\theta,\phi)\to0$ and the Mahalanobis term vanishes as $\hat{x}_{t+h}\to x_{t+h}$, we obtain
\begin{align}
    \frac{d}{2}\log(2\pi) +\frac12\log|P_{t+h}|\to 0, \quad \text{or} \quad |P_{t+h}|\to(2\pi)^{-d}.\label{eq:Pdeterminant_limit}
\end{align}
Under the UKF update, the covariance $P_{t+h}$ is determined by the weighted spread of the sigma points around the predicted mean. Therefore, as the prediction error vanishes and the covariance converges to its limiting value, the sigma points must concentrate around the groundtruth trajectory. In particular, the predicted mean satisfies
\begin{align}
    \hat x_{k\mid k-1} = \sum_{i=0}^{2n} W_m^{(i)}\, \mathcal X_{k\mid k-1}^{(i)}\to x_k,
\end{align}
and the weighted covariance of the sigma points collapses as
\begin{align}
    \sum_{i=0}^{2n}W_c^{(i)}\bigl(\mathcal X_{k\mid k-1}^{(i)}-\hat x_{k\mid k-1}\bigr)
    \bigl(\mathcal X_{k\mid k-1}^{(i)}-\hat x_{k\mid k-1}\bigr)^\top\to 0,
\end{align}
which is our second condition.
Finally, if we denote the limiting covariance matrix as $Q$, then $P_{t+h}\to Q$. Combining with Eq.~\eqref{eq:Pdeterminant_limit}, the determinant condition becomes
\begin{align}
    |Q|=(2\pi)^{-d},
\end{align}
which is our third condition.
This proves the necessity of the stated conditions.

\noindent\textbf{Sufficiency}
Conversely, assume that the sigma points converge to the groundtruth trajectory and that the predicted covariance converges to a reference covariance $Q$, which satisfies $|Q|=(2\pi)^{-d}$.
Because the sigma points converge to the groundtruth state, the predicted mean of UKF also converges as
\begin{align}
    \hat{x}_{k|k-1}\to x_k, \hat{x}_{t+h}\to x_{t+h}.
\end{align}
Therefore, $\big\|F_{\theta}(I_{\phi},\,i_n)-x_{t+h}\big\|_2^{\,2}\to 0$.
It remains to show that the UKF likelihood term of Eq.~\eqref{eq:ukf_likeliholod} also converges to zero.

We first analyze the log-determinant terms in the UKF likelihood.
By Lemma~\ref{lem:lem3.2},
\begin{align}
    \log \det(P_k)-\log \det(Q) = \int_{0}^{1} \operatorname{tr}\!\left[(Q+s(P_k-Q))^{-1}(P_k-Q)\right]\, ds.\label{eq:logPk-logQ}
\end{align}  
By Lemma~\ref{lem:lem3.3}, we can derive the inequality for the inverse term in Eq.~\eqref{eq:logPk-logQ} as:
\begin{align}
    Q+s(P_k-Q) \geq \frac{m}{2}E,
\end{align}
or
\begin{align}
\left\|\bigl(Q+s(P_k-Q)\bigr)^{-1}\right\|_{2}\le \frac{2}{m}.
\label{eq:4.23}
\end{align}
Since $|\operatorname{tr}(B)|\le d||B||_{2}$ holds for an arbitrary matrix $B\in \mathbb{R}^{d\times d}$, the following inequality holds:
\begin{align}
    |\operatorname{tr}\!\left(\bigl(Q+s(P_k-Q)\bigr)^{-1}(P_k-Q)\right)|\le d\left\|\!\bigl(Q+s(P_k-Q)\bigr)^{-1}(P_k-Q)\right\|_{2},\label{eq:trace_ineq}
\end{align}
Then, the following inequality holds by the submultiplicativity of the spectral norm as
\begin{align}
    \left\|\!\bigl(Q+s(P_k-Q)\bigr)^{-1}(P_k-Q)\right\|_{2} \leq || (Q+s(P_k-Q)\bigr)^{-1}||_{2}\cdot||P_k-Q||_{2},
\end{align}
and the inequality of Eq.~\eqref{eq:trace_ineq} is reformulated as below, when combined with the result from Eq.~\eqref{eq:4.23}.
\begin{align}
    |\operatorname{tr}\!\left(\bigl(Q+b(P_k-Q)\bigr)^{-1}(P_k-Q)\right)|
    &\le d\left\|\bigl(Q+s(P_k-Q)\bigr)^{-1}\right\|_2\cdot\|P_k-Q\|_2 \label{eq:trace_upperbound}\\
    &\le d\cdot\frac{2}{m}\|P_k-Q\|_2.\nonumber
\end{align}
Substituting Eq.~\eqref{eq:trace_upperbound} back into Eq.~\eqref{eq:logPk-logQ} yields the following inequalities.
\begin{align}
|\log|P_k|-\log|Q||
&=|\int_{0}^{1}\operatorname{tr}\!\left(\bigl(Q+s(P_k-Q)\bigr)^{-1}(P_k-Q)\right)\,ds | \\
&\le \int_{0}^{1}|\operatorname{tr}\!\left(\bigl(Q+s(P_k-Q)\bigr)^{-1}(P_k-Q)\right)|\,ds\notag\\ 
&\le \int_{0}^{1} d\cdot\frac{2}{m}\,\|P_k-Q\|_{2}\,ds\notag\\
&= d\cdot\frac{2}{m}\,\|P_k-Q\|_{2}.\notag
\end{align}
Therefore, if $P_k \to  Q$, then $\log|P_k|\to\log|Q|$.
Applying this to $P_{t+h}$, we have
\begin{align}
    \frac{1}{2}\log|P_{t+h}|+\frac{d}{2}\log(2\pi)\to
    \frac{1}{2}\log|Q|+\frac{d}{2}\log(2\pi).\label{eq:logp_converge}
\end{align}
Using the condition $|Q|=(2\pi)^{-d}$, the right-hand side of Eq.~\eqref{eq:logp_converge} becomes zero. Hence, the log determinant contribution in the UKF likelihood of Eq.~\eqref{eq:ukf_likeliholod} vanishes.

We next analyze the remaining Mahalanobis term in the UKF likelihood. Since the minimum eigenvalue of $P_{t+h}$ is bounded by Lemma~\ref{lem:lem3.3}, we get $\|P_{t+h}^{-1}\|_2\le \frac{2}{m}$.
Moreover, since $\hat{x}_{t+h}\to x_{t+h}$, the remaining Mahalanobis term in Eq.~\eqref{eq:ukf_likeliholod} is bounded as
\begin{align}
\frac{1}{2}(x_{t+h}-\hat{x}_{t+h})^{\top}P_{t+h}^{-1}(x_{t+h}-\hat{x}_{t+h})
&\le
\frac{1}{2}\|P_{t+h}^{-1}\|_2\,\|x_{t+h}-\hat{x}_{t+h}\|_2^2
\notag\\
&\le
\frac{1}{m}\|x_{t+h}-\hat{x}_{t+h}\|_2^2
\to 0.
\end{align}

Combining these limits yields $L_{UKF}(\theta,\phi)\to0$. Hence, we conclude that $J(\theta,\phi)\to 0$. This proves the sufficiency of the stated conditions.
\end{proof}

Therefore, if we can make our forecaster loss function tend toward zero, we can theoretically satisfy the conditions stated in Theorem~\ref{thm:thm3.4}. Conversely, if the predicted state, sigma points, and covariance satisfy the corresponding convergence condition, then the objective function also converges to zero.

\section{Experiment}\label{sec:sec5}

\subsection{Experimental Settings}
In this section, we empirically evaluate the proposed PDYffusion framework on a diverse set of long-horizon spatiotemporal dynamical prediction tasks. The experiments are designed to assess both distributional accuracy and temporal stability under extended forecasting horizons, where cumulative error and noise amplification are known to be critical challenges. We compare PDYffusion against representative baselines, including 1) DYffusion~\cite{ruhling2023DYffusion}, 2) DDPM~\cite{ho2020denoising}, 3) MCVD~\cite{voleti2022mcvd}, and uncertainty-aware variants based on 4) perturbation~\cite{pathak2022fourcastnet} and 5) dropout~\cite{gal2016dropout}.

We evaluate PDYffusion on four benchmark datasets, which are 1) Sea Surface Temperature (SST), 2) Navier–Stokes, 3) Spring-mesh, and 4) Wave. These datasets cover diverse spatiotemporal systems, including ocean dynamics, incompressible fluid flow, elastic deformation, and oscillatory wave propagation, respectively.
All datasets are evaluated under long-horizon prediction settings, where models must extrapolate system dynamics far beyond the conditioning window. This experimental setup highlights the ability of each method to suppress cumulative error, maintain physical plausibility, and produce well-calibrated predictive distributions over time intervals.

To comprehensively evaluate predictive performance, we consider metrics that capture complementary aspects of long-term forecasting quality. In particular, we report 1) the Continuous Ranked Probability Score (CRPS)~\cite{matheson1976scoring} to assess probabilistic calibration and distributional fidelity, 2) mean squared error (MSE) to measure point-wise accuracy, and 3) the Spread-Skill Ratio (SSR)~\cite{fortin2014should} to quantify temporal smoothness and dynamical consistency of predicted trajectories. Lower values of both CRPS and MSE indicate better predictive accuracy, while SSR values close to 1 indicate better spectral stability and dynamical consistency. All results are averaged over five independent runs to account for stochastic variability.

\subsection{Quantitative Result}\label{sec:sec5.1.1}
Table~\ref{tab:tab2} summarizes the quantitative comparison of PDYffusion with representative baseline models across the benchmark datasets.
On the Navier--Stokes, Spring-mesh, and Wave datasets, PDYffusion consistently achieves the best overall predictive accuracy among all competing methods; with the lowest CRPS of 0.059, 0.0092, and 1.82e-3, and the lowest MSE of 0.017, 4.01e-4, and 1.57e-5 on each dataset, respectively. These results demonstrate the strong probabilistic prediction and calibration accuracies of PDYffusion across diverse dynamical systems.

\begin{table*}[t]
\centering
\small
\setlength{\tabcolsep}{2pt}
\renewcommand{\arraystretch}{1.1}

\begin{subtable}[t]{0.49\textwidth}
\centering
\resizebox{\linewidth}{!}{%
\begin{tabular}{lcccc}
\toprule
Method & CRPS & MSE & SSR & Time [s] \\
\midrule
Perturbation & $0.281 \pm 0.004$ & $0.180 \pm 0.011$ & $0.411 \pm 0.046$ & $0.4241$ \\
Dropout      & $0.266 \pm 0.003$ & \uline{$0.164 \pm 0.004$} & $0.406 \pm 0.042$ & $0.4242$ \\
DDPM         & $0.246 \pm 0.005$ & $0.176 \pm 0.005$ & $0.674 \pm 0.011$ & $0.3054$ \\
MCVD         & \bm{$0.216 \pm 0.056$} & \bm{$0.161 \pm 0.049$} & \uline{$0.926 \pm 0.085$} & $78.9950$ \\
DYffusion    & \uline{$0.224 \pm 0.002$} & $0.173 \pm 0.001$ & \bm{$1.033 \pm 0.005$} & $4.6722$ \\
\midrule
PDYffusion   & $0.226 \pm 0.001$ & $0.166 \pm 0.003$ & $0.683 \pm 0.006$ & $13.0880$ \\
\bottomrule
\end{tabular}}
\caption{SST}
\label{tab:sst_sub}
\end{subtable}
\hfill
\begin{subtable}[t]{0.49\textwidth}
\centering
\resizebox{\linewidth}{!}{%
\begin{tabular}{lcccc}
\toprule
Method & CRPS & MSE & SSR & Time [s] \\
\midrule
Perturbation & $0.090 \pm 0.001$ & $0.028 \pm 0.000$ & $0.448 \pm 0.002$ & 0.0895 \\
Dropout      & $0.078 \pm 0.001$ & $0.027 \pm 0.001$ & $0.715 \pm 0.005$ & 0.0886 \\
DDPM         & $0.180 \pm 0.004$ & $0.105 \pm 0.010$ & $0.573 \pm 0.001$ & 0.0692 \\
MCVD         & $0.152 \pm 0.044$ & $0.070 \pm 0.033$ & $0.524 \pm 0.064$ & 58.4255 \\
DYffusion    & \uline{$0.067 \pm 0.003$} & \uline{$0.022 \pm 0.002$} & \bm{$0.877 \pm 0.006$} & 2.9715 \\
\midrule
PDYffusion   & \bm{$0.059 \pm 0.005$} & \bm{$0.017 \pm 0.002$} & \uline{$0.731 \pm 0.008$} & 10.3917 \\
\bottomrule
\end{tabular}}
\caption{Navier--Stokes}
\label{tab:ns_sub}
\end{subtable}

\vspace{0.6em}

\begin{subtable}[t]{0.49\textwidth}
\centering
\resizebox{\linewidth}{!}{%
\begin{tabular}{lcccc}
\toprule
Method & CRPS & MSE & SSR & Time [s] \\
\midrule
Perturbation & $0.0151 \pm 0.0004$ & $9.05\mathrm{e}{-4} \pm 6.55\mathrm{e}{-5}$ & $1.361 \pm 0.038$ & 0.0152 \\
Dropout      & $0.0138 \pm 0.0006$ & $7.27\mathrm{e}{-4} \pm 6.80\mathrm{e}{-5}$ & \bm{$1.017 \pm 0.029$} & 0.0161 \\
DDPM         & $0.0165 \pm 0.0010$ & $14.13\mathrm{e}{-4} \pm 1.05\mathrm{e}{-4}$ & $0.763 \pm 0.009$ & 0.0098 \\
MCVD         & $0.0146 \pm 0.0094$ & $6.77\mathrm{e}{-4} \pm 1.42\mathrm{e}{-4}$ & $0.782 \pm 0.055$ & 33.4163 \\
DYffusion    & \uline{$0.0103 \pm 0.0022$} & \uline{$4.20\mathrm{e}{-4} \pm 2.10\mathrm{e}{-4}$} & \uline{$1.133 \pm 0.083$} & 1.1005 \\
\midrule
PDYffusion   & \bm{$0.0092 \pm 0.0019$} & \bm{$4.01\mathrm{e}{-4} \pm 2.21\mathrm{e}{-4}$} & $1.204 \pm 0.095$ & 6.2841 \\
\bottomrule
\end{tabular}}
\caption{Spring-mesh}
\label{tab:spring_sub}
\end{subtable}
\hfill
\begin{subtable}[t]{0.49\textwidth}
\centering
\resizebox{\linewidth}{!}{
\begin{tabular}{lcccc}
\toprule
Method & CRPS & MSE & SSR & Time [s] \\
\midrule
Perturbation & $4.88\mathrm{e}{-3}\pm2.30\mathrm{e}{-4}$ & $3.30\mathrm{e}{-5}\pm4.04\mathrm{e}{-6}$ & \uline{$1.164 \pm 0.022$} & 0.0008 \\
Dropout      & $6.51\mathrm{e}{-3}\pm3.18\mathrm{e}{-4}$ & $3.26\mathrm{e}{-5}\pm4.86\mathrm{e}{-6}$ & $1.214 \pm 0.065$ & 0.0008 \\
DDPM         & $9.85\mathrm{e}{-3}\pm2.30\mathrm{e}{-4}$ & $14.31\mathrm{e}{-5}\pm5.27\mathrm{e}{-6}$ & $1.901 \pm 0.079$ & 0.0001 \\
MCVD         & $9.31\mathrm{e}{-3}\pm4.86\mathrm{e}{-4}$ & $26.34\mathrm{e}{-5}\pm6.11\mathrm{e}{-6}$ & $1.400 \pm 0.091$ & 7.2594 \\
DYffusion    & \uline{$2.02\mathrm{e}{-3}\pm3.46\mathrm{e}{-4}$} & \uline{$1.94\mathrm{e}{-5}\pm3.43\mathrm{e}{-6}$} & $0.836 \pm 0.043$ & 0.5258 \\
\midrule
PDYffusion   & \bm{$1.82\mathrm{e}{-3}\pm3.01\mathrm{e}{-4}$} & \bm{$1.57\mathrm{e}{-5}\pm4.50\mathrm{e}{-6}$} & \bm{$0.913 \pm 0.056$} & 3.7763 \\
\bottomrule
\end{tabular}}
\caption{Wave}
\label{tab:wave_sub}
\end{subtable}

\caption{Quantitative results on benchmark datasets}
\label{tab:tab2}
\end{table*}

In contrast, a distinctive behavior can be observed on the SST dataset, where diffusion-based long-horizon forecasting models do not exhibit the same level of performance improvement as observed on the other datasets. One possible explanation is the inherent nature of the SST data. Unlike the synthetic or controlled dynamical datasets, SST represents a real-world large-scale physical system and contains substantial measurement noise and complex environmental variability. In such settings, modeling uncertainty becomes particularly important, and accurately capturing long-term physical dynamics becomes more challenging.
We conjecture that this characteristic of the SST dataset may partly explain why both DYffusion and PDYffusion do not show the same degree of performance improvement as on the other datasets. Interestingly, MCVD achieves the strongest performance on SST in several metrics. One possible interpretation is related to the modeling characteristics of MCVD. As a Markov-chain-based diffusion model, MCVD primarily focuses on short-term temporal transitions by conditioning on past states. In contrast, DYffusion and PDYffusion are designed for long-horizon forecasting using PDE-inspired temporal modeling.
Therefore, it is plausible that the modeling bias of MCVD toward short-term dynamics aligns better with the noisy and locally evolving characteristics of the SST dataset. This observation suggests that the performance differences across models may reflect not only algorithmic improvements but also the interaction between model inductive bias and the underlying properties of the dataset. While this interpretation remains speculative, it highlights an interesting direction for further investigation.

In terms of the SSR metric, although PDYffusion does not always achieve the best performance among the compared methods, its SSR values remain relatively close to the ideal value of 1, indicating stable spectral behavior in long-horizon dynamics. It should be noted that prediction accuracy metrics (i.e., CRPS and MSE) and uncertainty-related metrics (i.e., SSR) often exhibit an inherent trade-off in dynamical forecasting. As models prioritize more accurate reconstruction of the predicted states, deviations in spectral stability may occur. This trade-off is further analyzed in Section~\ref{sec:trade-off}.

\subsection{Effect of Boundary Condition}\label{sec:sec5.1.2}

PDYffusion is trained with a PDE-based regularization term, which enables the model to learn the underlying PDE structure of the dynamical system. An essential component of PDE formulations is the choice of boundary conditions~\cite{evans2022partial}. Since different physical systems are characterized by distinct PDE structures, it is important to examine which boundary conditions allow PDYffusion to perform most effectively. Hence, we consider three commonly used boundary conditions, which are Dirichlet, Neumann, and periodic; and evaluate their impact using CRPS, MSE, and SSR in Table~\ref{tab:tab3}.

\begin{table*}[h]
\centering
\small
\setlength{\tabcolsep}{2pt}
\renewcommand{\arraystretch}{1.0}

\begin{tabular}{lccc ccc ccc ccc}
\toprule
\multirow{2}{*}{Method}
& \multicolumn{3}{c}{SST}
& \multicolumn{3}{c}{Navier--Stokes}
& \multicolumn{3}{c}{Spring-mesh}
& \multicolumn{3}{c}{Wave} \\
\cmidrule(lr){2-4}\cmidrule(lr){5-7}\cmidrule(lr){8-10}\cmidrule(lr){11-13}
& CRPS & MSE & SSR
& CRPS & MSE & SSR
& CRPS & MSE & SSR
& CRPS & MSE & SSR \\
\midrule
Dirichlet
& 0.264 & 0.181 & 0.671
& $\bm{0.059}$ & $\bm{0.017}$  & 0.731
& $\bm{0.0092}$ & $\bm{4.01\mathrm{e}{-4}}$  & $\bm{1.204}$
& $1.94\mathrm{e}{-3}$& $2.33\mathrm{e}{-5}$& 0.853 \\
Neumann
& 0.232 & 0.172 & 0.665
& 0.063 & 0.020 & $\bm{0.735}$
& 0.0105 & $4.16\mathrm{e}{-4}$ & 1.211
& $2.05\mathrm{e}{-3}$ & $2.18\mathrm{e}{-5}$ & 1.110 \\
Periodic
& $\bm{0.226}$ & $\bm{0.166}$ & $\bm{0.683}$
& 0.107 & 0.033 & 0.712
& 0.0121 & $4.39\mathrm{e}{-4}$ & 0.776
& $\bm{1.82\mathrm{e}{-3}}$ & $\bm{1.57\mathrm{e}{-5}}$ & $\bm{0.913}$ \\
\bottomrule
\end{tabular}

\caption{Experiment results on boundary conditions}
\label{tab:tab3}
\end{table*}

On the SST and Wave datasets, periodic boundary conditions achieve the best overall performance, attaining lower CRPS and MSE values while maintaining SSR values closer to 1.  This behavior is consistent with the inherently periodic and spatially continuous nature of ocean temperature fields and wave propagation phenomena, where periodic constraints better preserve spectral continuity.
In contrast, Dirichlet boundary conditions yield the strongest performance on the Navier--Stokes and Spring-mesh datasets. These datasets involve constrained spatial domains and structured boundary interactions, where fixed-value constraints more accurately reflect the physical characteristics of the system.
Neumann boundary conditions generally produce intermediate performance, indicating that gradient-based constraints partially capture physical structure but may not fully enforce spatial consistency.

\subsection{Tuning of Hyperparameter $\lambda$}
\begin{wraptable}[8]{r}{0.35\textwidth}
\vspace{-10pt}
\tiny
\setlength{\tabcolsep}{3pt}
\renewcommand{\arraystretch}{1.15}
\resizebox{\linewidth}{!}{%
\begin{tabular}{cccc}
\toprule
$\lambda$ & CRPS & MSE & SSR \\
\midrule
$\lambda=0.1$ & $0.241$ & $0.178$ & $0.610$ \\
$\lambda=0.2$ & \bm{$0.226$} & \bm{$0.166$} & \bm{$0.683$} \\
$\lambda=0.5$ & $0.391$ & $0.301$ & $0.522$ \\
$\lambda=1.0$ & $0.438$ & $0.326$ & $0.524$ \\
\bottomrule
\end{tabular}}
\caption{Ablation of $\lambda$ on SST}
\label{tab:tab4}
\end{wraptable}

In the proposed PDYffusion, the hyperparameter $\lambda$ is introduced to control the strength of the PDE-based regularization term. As described in Eq.~\eqref{eq:final_loss}, $\lambda$ balances the influence of the PDE constraint and the reconstruction objective in the interpolation loss. When $\lambda$ is too small, the PDE regularization has little effect, and the model behaves similarly to a purely data-driven interpolator. Conversely, excessively large values of $\lambda$ impose overly strong structural constraints that may hinder predictive performance.

To analyze the effect of this hyperparameter, we evaluate PDYffusion under different values of $\lambda$ on the SST dataset.
The results in Table~\ref{tab:tab4} show that a moderate value of $\lambda=0.2$ provides the best overall performance with the lowest CRPS of 0.226 and MSE of 0.166, while also producing an SSR value of 0.683 which is closer to the ideal value of 1.
When $\lambda$ becomes smaller (e.g., $\lambda=0.1$), the influence of PDE regularization becomes weaker, resulting in slightly degraded accuracy and stability.
On the other hand, when $\lambda$ becomes larger (e.g., $\lambda=0.5$ or $1.0$), both CRPS and MSE increase while SSR moves farther away from 1. This suggests that excessively strong PDE constraints can over-regularize the interpolator and negatively affect prediction quality.

\subsection{Accuracy-Stability Trade-off Analysis}\label{sec:trade-off}

In long-horizon dynamical forecasting, predictive accuracy and dynamical stability are often competing objectives. Such competition is also shown in our evaluation metrics. That is, lower MSE indicates higher point-wise prediction accuracy, while SSR values closer to 1 indicate more stable spectral behavior of the predicted dynamics. Therefore, understanding the relationship between these two metrics is important for evaluating whether a model can achieve reliable long-term predictions without sacrificing physical consistency.
To analyze this relationship, we introduce controlled perturbations by gradually injecting Gaussian noise $\mathcal{N}\!\big(0,E)$ into the predicted states. Increased magnitude of Gaussian noise induces higher uncertainty in the prediction process, allowing us to observe how prediction accuracy and spectral stability change under different noise levels.

\begin{wrapfigure}[13]{r}{0.4\linewidth}
\includegraphics[width=\linewidth]{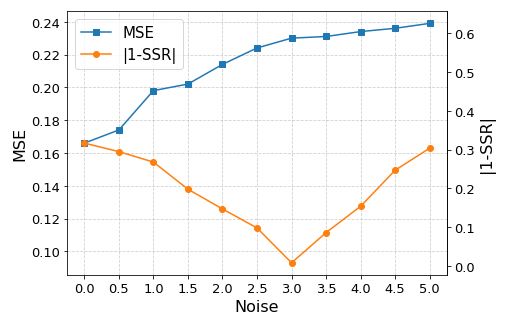}
\caption{Trade-off between accuracy and stability on SST dataset}
\label{fig:trade-sst}
\end{wrapfigure}

Figure~\ref{fig:trade-sst} presents the relationship between MSE and $|1-\mathrm{SSR}|$ for the SST dataset. In the figure, a clear trade-off is observed in the low-noise regime (i.e., noise $\leq 3.0$). In this region, as the magnitude of Gaussian noise increases, the MSE increases, indicating a degradation in prediction accuracy. At the same time, $|1-\mathrm{SSR}|$ decreases instead, meaning that SSR approaches the ideal value of 1 and the spectral stability of the predicted dynamics improves. This behavior demonstrates that increasing uncertainty can help preserve the spectral characteristics of the underlying physical dynamics, even though it reduces point-wise prediction accuracy.
Understanding this trade-off is therefore important when designing forecasting models for long-term dynamical systems.

\subsection{Qualitative Result}
\begin{figure}[!hb]
  \centering
  \includegraphics[width=0.8\textwidth]{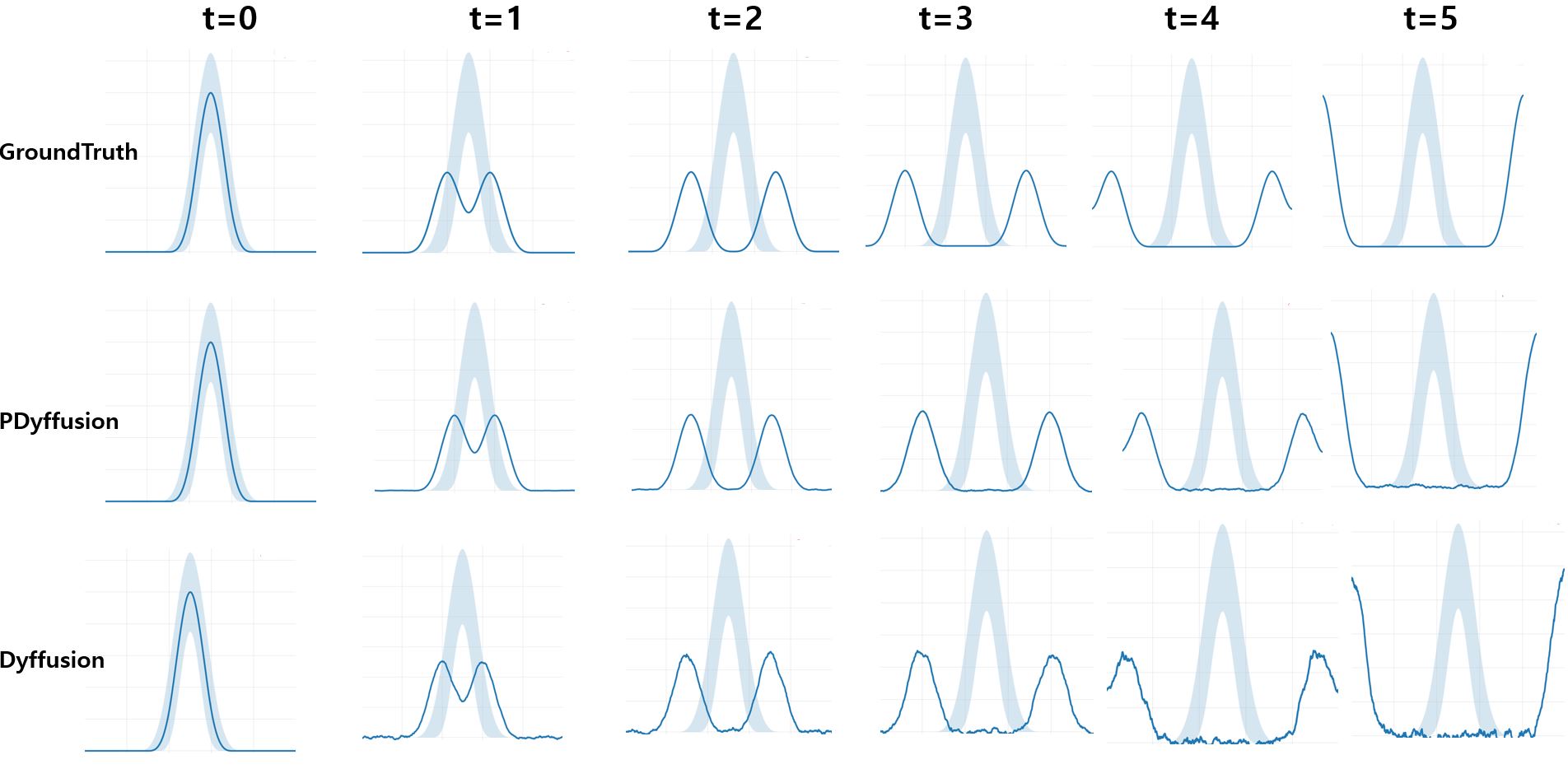}
  \caption{Wave trajectory results of PDYffusion and DYffusion}
  \label{fig:fig3}
\end{figure}

Long-horizon dynamical prediction models often suffer from progressive smoothing or error accumulation, which can eventually lead to unstable trajectories in autoregressive forecasting settings. To qualitatively evaluate the behavior of the proposed model, Figure~\ref{fig:fig3} visualizes representative wave trajectories predicted by DYffusion and PDYffusion together with the groundtruth signals across multiple time steps. In the figure, the solid line represents the predicted trajectory, while the shaded region represents the pulse family from which the initial groundtruth pulse is sampled.

As shown in the figure, both DYffusion and PDYffusion produce reasonable predictions in the early time steps. However, as the prediction horizon $t$ increases, DYffusion gradually exhibits irregular fluctuations and noisy oscillations in the predicted trajectories, particularly in the later time steps. This indicates the accumulation of prediction errors and instability in long-horizon forecasting of DYffusion.
In contrast, PDYffusion preserves the overall structure of the wave trajectory more precisely throughout the entire prediction horizon. The predicted signals remain smooth and maintain the characteristic double-peak structure of the groundtruth dynamics. Moreover, PDYffusion suppresses the spurious oscillatory behavior observed in DYffusion and maintains stable amplitude evolution over time.
This improvement can be attributed to the PDE-based regularization incorporated during training, which encourages the interpolator to respect the underlying physical structure of the dynamical system. By enforcing physically consistent trajectories, the model is able to reduce error accumulation and maintain stable temporal evolution.

Overall, these qualitative observations are consistent with the quantitative results presented in Section~\ref{sec:sec5.1.1} and the theoretical analysis in Section~\ref{sec:sec4}, further supporting the effectiveness of the proposed PDYffusion framework for long-horizon dynamical forecasting.

\section{Conclusion}
\label{sec:sec6}
We propose PDYffusion, a PDE-based diffusion model for improved probabilistic spatiotemporal forecasting. Unlike prior probabilistic forecasting approaches that obtain long-horizon predictions by repeatedly unrolling autoregressive models, PDYffusion is designed to support long-range inference intrinsically by incorporating PDE-derived physical structure as an inductive bias. Moreover, this work provides, to our knowledge, the first comprehensive evaluation of diffusion models across spatiotemporal forecasting tasks. To further mitigate noise accumulation in long-horizon time-series prediction, we incorporate an Unscented Kalman Filter (UKF) update that corrects the predicted trajectory and suppresses propagated stochastic errors. Furthermore, We primarily address the tracking and prediction of spatiotemporal signals governed by PDE-constrained, deterministic dynamics; however, we plan to extend this framework to SPDE-governed regimes in which the underlying evolution is intrinsically stochastic. In particular, we seek to characterize uncertainty transport, long-horizon uncertainty amplification, and the resulting error accumulation in settings where randomness is an inherent component of the system dynamics rather than a byproduct of the observation channel (e.g., financial and physical processes modeled by SDEs/SPDEs). Concretely, taking stochastic systems with explicit driving noise terms, including the Black-Scholes model~\cite{black1973pricing} as representative testbeds, we aim to generalize our PDE-informed gradient-based inductive biases to the SPDE setting. This extension will integrate Bayesian filtering (e.g., UKF-based measurement updates) with stochastic sampling mechanisms to provide principled long-horizon control of noise-induced degradation and uncertainty growth in time-series forecasting.

\clearpage
\appendix

\section{Notation}\label{app:notation}

\begin{table}[H]
\centering
\small
\begin{tabular}{@{}l p{0.85\linewidth}@{}}
\toprule
\textbf{Symbol} & \textbf{Used for} \\
\midrule
$E$ & Identity Matrix \\
$u$ & Groundtruth solution of PDE \\
$\Delta$ & Laplacian \\
$t$ & Index of temporal data snapshots \\
$h$ & Training horizon \\
$N$ & Total number of diffusion steps \\
$n$ & Indexer of the diffusion step, where $0 \le n \le N-1$ \\
$X$ & Distribution from which the data snapshots \\
$\mathcal{X}$ & Sigma point of UKF \\
$W$ & Weight of Sigma point \\
$k$ & Kernel function \\
$w$ & Frequency vector in complex space \\
$x_{t+i}$ & Data point at timestep $t+i$ \\
$\hat{x}_{t+i}$ & Predicted data point at timestep $t+i$ \\
$x_{t:t+h}$ & Shorthand for the sequence of data points $x_t, x_{t+1}, \ldots, x_{t+h}$ \\
$x_t$ & Initial conditions, i.e.\ input snapshot for the forecasting model \\
$x_{t+1:t+h}$ & Sequence of target snapshots that the forecasting model predicts given $x_t$ \\
$I_{\phi}$ & Interpolation neural network \\
$F_{\theta}$ & Forecasting neural network\\
$i_n$ & Mapping of diffusion step $n$ to timestep $i_n$\\
$U[a,b]$ & Uniform distribution \\
\bottomrule

\end{tabular}
\caption{Notation for PDYffusion}
\label{tab:tab1}
\end{table}
\FloatBarrier

\section{Details of Lemma}\label{app:appA}

In the proof of Section~\ref{sec:sec4}, Lemma~\ref{lem:lem4.2}, Lemma~\ref{lem:lem3.2} and Lemma~\ref{lem:lem3.3} were used. We provide the detailed proofs of the lemmas below.

\subsection{Proof of Lemma~\ref{lem:lem4.2}}\label{proof:lem4.2}

According to the definition of $I_{\phi}(x_t,x_{t+1},i) =u(x_t,x_{t+1},i)+\epsilon_{I_{\phi}}$, it satisfies that:
\begin{align}
    \epsilon_{I_{\phi}} = I_{\phi}-u = T_{\lambda}I_{base}-u = T_{\lambda}(u+\epsilon_{I_{base}})-u = u(T_{\lambda}-E)+T_{\lambda}\epsilon_{I_{base}}
    \label{eq:A1}
\end{align}
Furthermore, by assumption, $Au = 0$ implies $u\in ker(A)$.
Since $u$ consists only of the component corresponding to the eigenvalue 0 of $A$, the following holds:
\begin{align}
    T_{\lambda}u = (E+\lambda A)^{-1}u = u.
    \label{eq:A2}
\end{align}
By Eq.~\eqref{eq:A2}, the following equality holds.
\begin{align}
    T_{\lambda}u-u  = (T_{\lambda}-E)u = 0
    \label{eq:A3}
\end{align}
If the result of Eq.~\eqref{eq:A3} is applied to Eq.~\eqref{eq:A1}, it satisfies the following result.
\begin{align}
    \epsilon_{I_{\phi}} = T_{\lambda}\epsilon_{I_{base}}
\end{align}

Meanwhile, since $\epsilon_{I_{base}} = \Sigma_{j}\xi_{j}e_{j}$ and $Ae_j = \mu_{j}$ hold, we can write the following:
\begin{align}
    T_{\lambda}e_{j} = (E+\lambda A)^{-1}e_{j} = \frac{1}{1+\lambda \mu_{j}}e_j.
\end{align}
Therefore, it satisfies as below.
\begin{align}
    \epsilon_{\phi} = T_{\lambda}\epsilon_{base} = \Sigma_{j} T_{\lambda} \xi_{j} e_{j} = \Sigma_{j}\frac{1}{1+\lambda \mu_{j}}\xi_{j}e_j
\end{align}
Furthermore, since $\mu_{j} \ge 0$ and $\lambda \ge 0$, it follows that $0\le\frac{1}{1+\lambda \mu_{j}}\le1$ . This suggests that PDE regularization acts as a filter that suppresses each mode.

Meanwhile, by definition, $\Sigma_{base}$ satisfies the following
\begin{align}
    \Sigma_{base} = cov(\epsilon_{base}) = \Sigma_{j} var(\xi_{j})e_{j}e_{j}^{T}
\end{align}
Similarly, since $\epsilon_{\phi} = \Sigma_{j}\frac{1}{1+\lambda \mu_{j}}var(\xi_{j})e_j$, the following holds.
\begin{align}
    \Sigma_{\phi} = cov(\epsilon_{\phi}) = \Sigma_{j}\frac{1}{(1+\lambda \mu_{j})^{2}}\xi_{j}e_je_j^{T}
\end{align}
If we choose an arbitrary $w = \Sigma_{j}w_{j}e_{j}\in \mathcal{H}$, the following inequality holds by the equivalent characterization of the Loewner order~\cite{bartz2017resolvent}.
\begin{align}
    <w,\Sigma_{\phi}w> = \Sigma_{j} \frac{1}{(1+\lambda\mu_{j})^{2}}var(\xi)w_{j}^{2}\le \Sigma_{j}var(\xi)w_{j}^{2} = <w,\Sigma_{base}w>
\end{align}
Since this inequality holds for all $w\in \mathcal{H}$, it is equivalent to the following.
\begin{align}
    \Sigma_{\phi}\le\Sigma_{base}
\end{align}

\subsection{Proof of Lemma~\ref{lem:lem3.2}}\label{proof:lem3.2}

Let matrices $A$ and $B$ be invertible matrices, $t\in[0,1]$, $X(t) = B + t(A-B), X(0)=B, X(1)=A$, and $f(t) = \log\det X(t)$. Then, it holds that $f(1) - f(0) = \log\det A - \log\det B$.

Moreover, taking derivative of $f(t)$ yields 
\begin{align}
    f'(t)=\frac{d}{dt}\log\det X(t)= \frac{1}{\det X(t)}\frac{d}{dt}\det X(t) =\mathrm{tr}\!\left(X(t)^{-1}X'(t)\right).
\end{align}
Also, since $X'(t) = A-B$ holds, then it also holds that 
\begin{align}
    \log\det A - \log\det B=\int_0^1\mathrm{tr}\!\left((B+t(A-B))^{-1}(A-B)\right)\,dt.
\end{align}

\subsection{Proof of Lemma~\ref{lem:lem3.3}}\label{proof:lem3.3}

We assume that $mE\preceq P_k\preceq ME$ and $P_k \to Q$, where $P_k$ and $Q$ are symmetric and $M\le m < 0$.
Then, it holds that 
\begin{align}
 mE\preceq Q\preceq ME.
\end{align}
Therefore, for every eigenvalue $\lambda$, the followings hold:
\begin{align}
    \lambda_{\min}(Q) \ge m,
\quad
\lambda_{\max}(Q) \le M.
\label{eq:a.4}
\end{align}
By combining Eq.~\eqref{eq:a.4} with Weyl inequality~\cite{weyl1912asymptotische}, which states $|\lambda_i(P_k)-\lambda_i(Q)| \le \|P_k-Q\|_2, (i=1,\dots,n)$, we get
\begin{align}
\lambda_{\min}(P_k)
&\ge \lambda_{\min}(Q)-\|P_k-Q\|_2
\ge m-\|P_k-Q\|_2, \label{eq:A5}\\
\lambda_{\max}(P_k)
&\le \lambda_{\max}(Q)+\|P_k-Q\|_2
\le M+\|P_k-Q\|_2. \label{eq:A6}
\end{align}

Since $P_k \to Q$, we have $\|P_k-Q\|_2 \to 0$. Also, since $m=\lambda_{\min}(Q)>0$, we may apply the definition of convergence with $\varepsilon = \frac{m}{2} >0$. Hence, there exists $K\in\mathbb{N}$ such that $\forall$ $k\ge K$,
\begin{align}
\|P_k-Q\|_2 < \frac{m}{2}.
\end{align}
Using the bounds in Eq.~\eqref{eq:A5} and \eqref{eq:A6}, for every $k\ge K$ we obtain
\begin{align}
\lambda_{\min}(P_k) \ge m-\frac{m}{2}=\frac{m}{2},
\end{align}
and
\begin{align}
\lambda_{\max}(P_k)
\le M+\frac{m}{2}
\le M+\frac{M}{2}
=\frac{3M}{2}
\le 2M,
\end{align}
where we used $m\le M$.
Therefore, for all sufficiently large $k$,
\begin{align}
\operatorname{spec}(P_k)\subset \left[\frac{m}{2},\,2M\right].
\end{align}

On the other hand, since $mE\preceq Q\preceq ME$, we already have
\begin{align}
\operatorname{spec}(Q)\subset [m,M]\subset \left[\frac{m}{2},\,2M\right].
\end{align}
Consequently, for all sufficiently large $k$,
\begin{align}
\operatorname{spec}(P_k),\ \operatorname{spec}(Q)
\subset \left[\frac{m}{2},\,2M\right].
\end{align}

\FloatBarrier

\section{Code Implementation}

We provide our implementation codes as the GitHub link of \url{https://github.com/minyoung445/Dynamic-informed-Diffusion-model-Through-PDE-based-sampling-and-Filtering-method}

\end{document}